\documentclass{article} 
\usepackage{iclr2027_conference,times}

\usepackage{amsmath,amsfonts,bm}

\def\eqref#1{equation~\ref{#1}}

\def\1{\bm{1}}

\DeclareMathAlphabet{\mathsfit}{\encodingdefault}{\sfdefault}{m}{sl}
\SetMathAlphabet{\mathsfit}{bold}{\encodingdefault}{\sfdefault}{bx}{n}

\usepackage{booktabs,multirow,graphicx,xcolor,colortbl,textcomp}

\definecolor{gdloraBlue}{RGB}{221,235,247}

\providecommand{\gluescore}[2]{}
\renewcommand{\gluescore}[2]{%
    #1\textsubscript{\textpm\,#2}%
}

\providecommand{\glueavg}[1]{}
\renewcommand{\glueavg}[1]{#1}

\usepackage{hyperref}
\usepackage{url}
\usepackage{graphicx}
\usepackage{amsmath}
\usepackage{amsthm}
\usepackage{booktabs}
\usepackage{graphicx}
\usepackage{algorithm}
\usepackage{algpseudocode}
\newtheorem{theorem}{Theorem}
\newtheorem{proposition}{Proposition}
\usepackage{subcaption}
\title{Beyond Low-Rank Parameterization:\\ Narrowing the Gap Between LoRA and Full Fine-Tuning via Gradient Decomposition}

\author{%
  {\bf Yihao Ouyang}$^{1}$, 
  {\bf Shiwei Li}$^{1}$, 
  {\bf Haozhao Wang}$^{1}$, 
  {\bf Xiandi Luo}$^{1}$, 
  {\bf Zhuoqi Hu}$^{1}$, \\ 
  {\bf Jinglun Yu}$^{2}$, 
  {\bf Yichen Li}$^{1}$, 
  {\bf Ruixuan Li}$^{1}$ \\
  $^1$ Huazhong University of Science and Technology, Wuhan, China \\
  $^2$ Hebei University of Technology , Tianjin, China
}

\iclrfinalcopy
\begin{document}

\maketitle
\lhead{Preprint}

\begin{abstract}

Low-Rank Adaptation (LoRA) is a widely used approach to
parameter-efficient fine-tuning (PEFT), yet a performance gap can remain
relative to full fine-tuning (FFT). Many LoRA variants improve the
initialization or optimization of low-rank factors. At each training step, however, their first-order weight-space directions are constrained
by the current parameterization. We characterize the corresponding
\textbf{LoRA-accessible gradient space} and show that it coincides with the
tangent space induced by the current LoRA parameterization. This
characterization yields an orthogonal decomposition of the full weight
gradient at the current model parameters. We term the component orthogonal
to this space the \textbf{normal gradient}.
Based on this decomposition, we propose
\textbf{GDLoRA (Gradient-Decomposed Low-Rank Adaptation)}. GDLoRA reconstructs
the full weight gradient from forward activations and backward signals,
extracts its normal component, and directly updates the base weights with
this component, while retaining standard AdamW optimization for the LoRA
factors. GDLoRA incorporates complementary normal gradients without
increasing standard LoRA's optimizer-state memory budget under matched
adapter and optimizer configurations. Experiments on natural language
understanding, mathematical reasoning, commonsense reasoning, and image
classification show that GDLoRA consistently improves over LoRA and
narrows the performance gap to FFT.
The code is available at \url{https://anonymous.4open.science/r/GDLoRA}.

\end{abstract}

\section{Introduction}

Large foundation models achieve strong performance across diverse
applications~\citep{devlin2019bert,brown2020language,chowdhery2023palm},
but adapting them through full fine-tuning (FFT) incurs substantial
computation and memory costs for gradients and optimizer
states~\citep{rajbhandari2020zero}.
Parameter-efficient fine-tuning (PEFT) reduces these costs by optimizing
a small subset of parameters while keeping most pretrained weights
frozen~\citep{houlsby2019parameter,li2021prefix,lester2021power}.
Among these methods, Low-Rank Adaptation (LoRA)~\citep{hu2021lora}
introduces trainable low-rank factors into pretrained layers, reducing
optimizer-state memory while maintaining competitive downstream
performance.

Despite these advantages, a performance gap can remain between LoRA
and FFT. A related line of research seeks to enable full-rank
training with reduced optimizer-state memory.
Fira~\citep{Chen2024FiraCW} pursues this goal by combining adaptive
optimization of low-rank projected gradients with scaled residual
updates. Its projection bases are obtained through SVD and reused
across multiple optimization steps to amortize decomposition costs.
However, as the leading gradient subspace drifts during training,
reused bases may become less aligned with important directions of
subsequent gradients, potentially limiting the effectiveness of
adaptive optimization, while more frequent basis refreshes incur
additional computational costs. A detailed analysis is provided in
Appendix~\ref{app:projection_refresh}.
Another line of research improves LoRA optimization through
adapter initialization and gradient alignment.
LoRA-GA~\citep{wang2024lora} uses full-gradient information to
initialize the factors so that their initial induced weight-space
gradient better approximates that of full fine-tuning.
LoRA-Pro~\citep{wang2025lora} studies the equivalent weight-space
gradient induced by the adapter gradients and adjusts them to
improve its approximation to the full weight gradient.

Although these adapter-based methods do not require periodic
full-gradient SVD refreshes, their first-order weight-space updates
remain restricted to directions accessible through the current
LoRA parameterization.
This motivates us to investigate gradient information beyond
these accessible directions.
Although the accessible space evolves as the factors change,
gradient components orthogonal to it cannot be expressed through
first-order factor perturbations at the current iterate.
We therefore ask: \textbf{\textit{Can we explicitly recover and
exploit this complementary gradient information while retaining
standard LoRA's optimizer-state memory budget?}}

To address this question, we characterize the
\textbf{LoRA-accessible gradient space}, consisting of all equivalent
weight-space gradients induced by the Euclidean factor gradients, and show
that it coincides with the tangent space induced by the current LoRA
parameterization. This characterization yields an orthogonal decomposition
of the full weight gradient evaluated at the current model parameters. We
call the component orthogonal to the accessible space the
\textbf{normal gradient}. When nonzero, its negative provides a local descent
direction for the current batch loss with the adapters and other weights
held fixed. It is also orthogonal to every first-order weight-space direction
available to the current adapters. These properties motivate using the normal
component to supplement standard LoRA optimization.

Based on this decomposition, we propose
\textbf{GDLoRA (Gradient-Decomposed Low-Rank Adaptation)}. GDLoRA reconstructs
the full weight gradient of LoRA-injected layers from forward activations and
backward signals, normalizes the recovered gradient by the number of input
positions, and extracts its normal component using the
current LoRA tangent space. This additional token-count normalization applies
only to the normal branch and preserves the gradient direction. The adapter parameters follow standard AdamW optimization,
while the token-normalized normal gradient directly updates the base weights initialized from
the pretrained model. Both branches use quantities computed from the same
pre-update parameters. The normal branch introduces no additional momentum
or variance buffers for the base weights. In this way, GDLoRA approaches or
matches the performance of full fine-tuning in the evaluated settings,
without introducing additional optimizer-state memory overhead relative to
standard LoRA under matched adapter and optimizer configurations.

Our contributions are summarized as follows:

\begin{itemize}

\item We characterize the \textbf{LoRA-accessible gradient space} through
its equivalence to the tangent space induced by the current parameterization,
and identify the \textbf{normal gradient} as a complementary component with
a local descent interpretation.

\item We propose \textbf{GDLoRA}, which retains standard AdamW optimization
of the LoRA factors and applies the token-normalized normal gradient
directly to the base weights, without introducing additional
base-weight momentum or variance buffers.

\item Across natural language understanding, mathematical and commonsense
reasoning, and image classification, GDLoRA improves average scores
by 2-4\% relative to LoRA, approaching FFT without additional
optimizer-state memory under matched configurations.

\end{itemize}

\section{Related work}

\paragraph{Parameter-Efficient Fine-Tuning.}

Given the increasing scale of foundation models, parameter-efficient fine-tuning (PEFT) methods have been extensively studied to reduce the computational and memory costs of model adaptation \citep{houlsby2019parameter,li2021prefix,lester2021power,hu2021lora}. Instead of updating all parameters during fine-tuning, PEFT methods optimize only a small subset of additional parameters while keeping the pretrained model largely frozen.
Existing PEFT methods can be broadly categorized into several groups. One line of research explores adapter-based tuning \citep{houlsby2019parameter,pfeiffer2020adapterhub}, which introduces lightweight trainable modules into pretrained networks and updates only these additional parameters during adaptation. Another line focuses on prompt-based tuning \citep{li2021prefix,lester2021power}, which adapts pretrained models by optimizing learnable prompts or virtual tokens without modifying the model parameters. In a related direction, Low-Rank Adaptation (LoRA) \citep{hu2021lora} introduces trainable
low-rank matrices into pretrained layers while keeping the base weights
frozen, reducing the number of optimized parameters and the associated
optimizer-state memory. In this work, we focus on improving the optimization capability of LoRA-based adaptation.

\paragraph{Low-Rank Adaptation.}

Low-Rank Adaptation (LoRA)~\citep{hu2021lora} adapts pretrained models
through trainable low-rank updates, balancing efficiency and performance.
Subsequent variants improve initialization, optimization, and adaptation
capability. PiSSA~\citep{meng2024pissa} initializes LoRA using principal
singular components of pretrained weights.
LoRA-GA~\citep{wang2024lora} aligns initial LoRA-induced gradients with
full fine-tuning gradients, while LoRA-One~\citep{zhang2025lora} studies
the relationship between LoRA adapters and full-gradient subspaces.
LoRA+~\citep{hayou2024lora+} assigns different learning rates to the two
factors, and rsLoRA~\citep{kalajdzievski2023rank} uses rank-dependent
scaling to stabilize training.
AdaLoRA~\citep{zhang2023adalora} dynamically allocates rank budgets
across layers. DoRA~\citep{liu2024dora} decomposes weights into magnitude
and direction, while LoRA-Pro~\citep{wang2025lora} adjusts adapter
gradients to better approximate full weight gradients.
GaLore~\citep{Zhao2024GaLoreML} reduces optimizer-state memory via
low-rank gradient projection; Fira~\citep{Chen2024FiraCW} adds scaled
residual updates for full-rank training. Both reuse SVD bases, trading
subspace alignment against refresh costs. GDLoRA instead retains standard
LoRA optimization and derives projection bases from the current adapters
each step, updating base weights along the normal gradient without
full-gradient SVD.

\section{Method}

\subsection{Full Gradient Recovery}
\label{sec:full_gradient_recovery}

GDLoRA supplements standard LoRA optimization with gradient
information orthogonal to the first-order directions accessible
through the current LoRA parameterization.
We first describe full weight-gradient recovery and the
token-count normalization used by the normal branch.

For a LoRA-injected linear layer, let
$W_{\mathrm{pt}}\in\mathbb{R}^{d_o\times d_i}$ denote the
pretrained weight, and let $W_{\mathrm{base},t}$ denote the
base weight at iteration $t$, initialized as
$W_{\mathrm{base},0}=W_{\mathrm{pt}}$.
The effective weight is
\begin{equation}
W'_t=W_{\mathrm{base},t}+sB_tA_t,
\label{eq:gdlora_effective_weight}
\end{equation}
where $B_t\in\mathbb{R}^{d_o\times r}$ and
$A_t\in\mathbb{R}^{r\times d_i}$ are the LoRA factors,
and $s\neq0$ is a fixed scaling factor (e.g., $s=\alpha/r$).

For input activations $X\in\mathbb{R}^{N\times d_i}$,
the batch and token dimensions are flattened into $N$ rows.
The layer output is
\begin{equation}
Y=X(W')^T\in\mathbb{R}^{N\times d_o}.
\end{equation}
Let
\begin{equation}
\delta=\frac{\partial\mathcal{L}}{\partial Y}
\in\mathbb{R}^{N\times d_o}
\end{equation}
denote the backward signal.
By the chain rule, the full weight gradient is
\begin{equation}
G_{\mathrm{raw}}=\frac{\partial\mathcal{L}}{\partial W'}
=\delta^TX\in\mathbb{R}^{d_o\times d_i}.
\label{eq:gdlora_gradient_recovery}
\end{equation}
Here, $G_{\mathrm{raw}}$ is the unconstrained weight gradient at the current
model parameters, and any loss scaling applied before
backpropagation is already reflected in $\delta$.

For the normal branch, GDLoRA further normalizes each
recovered gradient by the number of input positions:
\begin{equation}
G=\frac{G_{\mathrm{raw}}}{N}
=\frac{\delta^TX}{N}.
\label{eq:gdlora_token_normalization}
\end{equation}
In preliminary runs, we observed unstable training with unnormalized
normal-gradient updates, and selecting a suitable learning rate was
difficult within the settings explored. This motivated scaling the normal
gradient by $1/N$, where $N$ counts the input positions.
This preserves its direction and controls the effective step size,
equivalently rescaling the learning rate for fixed $N$.
Adapter gradients remain unchanged. Appendix~\ref{app:normal_update_stability}
provides a magnitude bound and local descent analysis.
For the following local analysis, let $\overline{\mathcal{L}}=\mathcal{L}/N$,
with $N$ fixed for the current batch. Adapter optimization continues to
use $\mathcal{L}$.

Gradient recovery adds a matrix multiplication relative to
standard LoRA but requires no momentum or variance buffers
for the base weights.
Full-weight gradient computation is also required by
Fira~\citep{Chen2024FiraCW}.
GDLoRA constructs its projection bases from thin adapter
factors, avoiding SVD of the full gradient.
The computational comparison is provided in
Appendix~\ref{app:gdlora_computational_cost}.

\begin{figure*}[t]
    \centering
    \includegraphics[width=\textwidth]{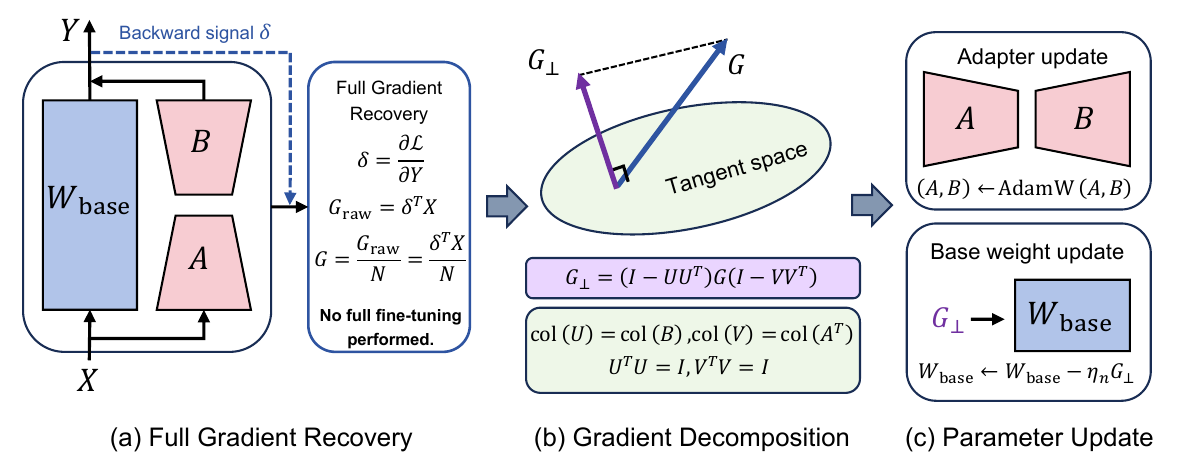}
    \caption{
    \textbf{Overview of GDLoRA.}
    \textbf{(a)} GDLoRA reconstructs the full weight gradient
    at the current model parameters from forward activations and backward
    signals, and obtains $G$ as defined in Section~\ref{sec:full_gradient_recovery}.
    \textbf{(b)} The gradient is decomposed with respect to the current
    parameterization-induced LoRA tangent space, and its orthogonal
    component $G_{\perp}$ is the \emph{normal gradient}.
    \textbf{(c)} The adapters follow standard AdamW optimization, while
    $G_{\perp}$ updates $W_{\mathrm{base}}$ without additional base-weight
    momentum or variance buffers. The two branches are complementary
    at first order at the current iterate.
    }
    \label{fig:GDLoRA_overview}
\end{figure*}

\subsection{Gradient Decomposition}
\label{sec:gradient_decomposition}

Our goal is to add a gradient component that is orthogonal to all
first-order weight-space directions available to the current adapters,
while retaining their standard optimization rule. This provides a precise
notion of complementary information; overlap with adapter directions is
not, by itself, assumed to be harmful.

\paragraph{First-order accessible directions.}
For the parameterization $F_s(A,B)=sBA$, its differential maps factor
perturbations to weight-space directions as
\begin{equation}
J_s(H_A,H_B)
=DF_s(A,B)[H_A,H_B]
=s(BH_A+H_BA).
\label{eq:gdlora_jacobian}
\end{equation}
We define the parameterization-induced tangent space by
\begin{equation}
\mathcal{T}_{\mathrm{LoRA}}
=\operatorname{Range}(J_s)
=\left\{s(BH_A+H_BA)
\;\middle|\;
H_A\in\mathbb{R}^{r\times d_i},\,
H_B\in\mathbb{R}^{d_o\times r}\right\}.
\end{equation}
Throughout, ``accessible'' refers to these first-order directions at the
current $(A,B)$. This definition remains valid for rank-deficient factors.
When both factors have rank $r$, it coincides with the usual tangent
space of the rank-$r$ matrix manifold at $sBA$, as used in geometric
approaches to LoRA optimization.

By the chain rule, the factor gradients are
\begin{equation}
\nabla_A\mathcal{L}=sB^{T}G_{\mathrm{raw}},
\qquad
\nabla_B\mathcal{L}=sG_{\mathrm{raw}}A^{T}.
\label{eq:gdlora_factor_gradients}
\end{equation}
Mapping these Euclidean gradients through $J_s$ and dividing by $N$ gives
the first-order weight-space gradient direction
\begin{align}
G_{\mathrm{LoRA}}(G)
&=\frac{s}{N}\bigl(B\nabla_A\mathcal{L}+\nabla_B\mathcal{L}A\bigr)\nonumber\\
&=s^2\bigl(BB^{T}G+GA^{T}A\bigr).
\label{eq:gdlora_equivalent_gradient}
\end{align}
This is the Euclidean-gradient-induced direction underlying the
equivalent-gradient view of LoRA; it is not the
actual AdamW update. Define the corresponding accessible gradient space as
\begin{equation}
\mathcal{G}_{\mathrm{LoRA}}
=\operatorname{Range}(\Phi_s),
\qquad
\Phi_s(H)=s^2\bigl(BB^{T}H+HA^{T}A\bigr).
\end{equation}

Let $U\in\mathbb{R}^{d_o\times r_B}$ and
$V\in\mathbb{R}^{d_i\times r_A}$ be orthonormal bases satisfying
\begin{equation}
\begin{gathered}
U^{T}U=I,\qquad V^{T}V=I,\\
\operatorname{col}(U)=\operatorname{col}(B),\qquad
\operatorname{col}(V)=\operatorname{col}(A^{T}),
\end{gathered}
\end{equation}
where $r_B=\operatorname{rank}(B)$ and $r_A=\operatorname{rank}(A)$.
A zero-rank factor has an empty basis and a zero associated projector.

\begin{theorem}[LoRA-Accessible Gradient Space Characterization]
\label{thm:lora_gradient_space}
For any $A,B$ and fixed $s\neq 0$,
\begin{equation}
\mathcal{G}_{\mathrm{LoRA}}
=\mathcal{T}_{\mathrm{LoRA}}
=\left\{UK+LV^{T}
\;\middle|\;
K\in\mathbb{R}^{r_B\times d_i},\,
L\in\mathbb{R}^{d_o\times r_A}\right\}.
\end{equation}
\end{theorem}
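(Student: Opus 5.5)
We will prove the two equalities by identifying each space with the third set
\[
\mathcal{S}=\left\{UK+LV^{T}\;\middle|\;K\in\mathbb{R}^{r_B\times d_i},\,L\in\mathbb{R}^{d_o\times r_A}\right\}.
\]
The easy direction is to show that both $\mathcal{T}_{\mathrm{LoRA}}$ and $\mathcal{G}_{\mathrm{LoRA}}$ are contained in $\mathcal{S}$. For $\mathcal{T}_{\mathrm{LoRA}}$ this follows from the factorizations $B=UC$ with $C=U^{T}B$ (since $UU^{T}$ is the orthogonal projector onto $\operatorname{col}(B)$) and $A=DV^{T}$ with $D=AV$. Then
\[
s(BH_A+H_BA)=U(sCH_A)+(sH_BD)V^{T}.
\]
For $\mathcal{G}_{\mathrm{LoRA}}$ the same factorization gives
\[
s^2(BB^{T}H+HA^{T}A)=U(s^2CB^{T}H)+(s^2HA^{T}D)V^{T}.
\]
Since $\Phi_s(H)=J_s(sB^{T}H,\,sHA^{T})$, we also get $\mathcal{G}_{\mathrm{LoRA}}\subseteq\mathcal{T}_{\mathrm{LoRA}}$ directly. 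The zero-rank cases are covered by the convention that an empty basis contributes nothing.

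The reverse inclusion $\mathcal{S}\subseteq\mathcal{T}_{\mathrm{LoRA}}$ is by explicit construction. Because $\operatorname{col}(U)=\operatorname{col}(B)$, there is a right inverse $B^{+}$ (the Moore--Penrose pseudoinverse) with $BB^{+}U=U$. Setting $H_A=s^{-1}B^{+}UK$ and $H_B=0$ gives $s\,BH_A=UK$. Symmetrically, $VV^{T}$ is the projector onto $\operatorname{col}(A^{T})=\operatorname{row}(A)$, so $V^{T}A^{+}A=V^{T}$. Setting $H_B=s^{-1}LV^{T}A^{+}$ gives $s\,H_BA=LV^{T}$. Adding the two yields $UK+LV^{T}$, so $\mathcal{T}_{\mathrm{LoRA}}=\mathcal{S}$. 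This uses $s\neq0$.

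The main obstacle is $\mathcal{S}\subseteq\mathcal{G}_{\mathrm{LoRA}}$, i.e.\ surjectivity of $\Phi_s$ onto $\mathcal{S}$. The cleanest route is an adjoint/kernel argument. $\Phi_s$ is self-adjoint and positive semidefinite with respect to the Frobenius inner product, since
\[
\langle H,\Phi_s(H)\rangle=s^2\bigl(\|B^{T}H\|_F^2+\|HA^{T}\|_F^2\bigr).
\]
Hence $\operatorname{Range}(\Phi_s)=\ker(\Phi_s)^{\perp}$, and $\ker\Phi_s=\{H: B^{T}H=0,\ HA^{T}=0\}=\{H: U^{T}H=0,\ HV=0\}$. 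We then compute $\mathcal{S}^{\perp}$. The condition $\langle UK+LV^{T},H\rangle=\langle K,U^{T}H\rangle+\langle L,HV\rangle=0$ must hold for all $K,L$, which is exactly $U^{T}H=0$ and $HV=0$. So $\mathcal{S}^{\perp}=\ker\Phi_s$, and in finite dimensions $\operatorname{Range}(\Phi_s)=(\ker\Phi_s)^{\perp}=\mathcal{S}^{\perp\perp}=\mathcal{S}$.

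Combining the three steps gives $\mathcal{G}_{\mathrm{LoRA}}=\mathcal{S}=\mathcal{T}_{\mathrm{LoRA}}$ for arbitrary, including rank-deficient, factors. The same orthogonality computation also gives the projector $P(G)=UU^{T}G+GVV^{T}-UU^{T}GVV^{T}$ that is used afterwards for the normal gradient.
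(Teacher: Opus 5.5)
Your proof is correct and follows essentially the same route as the paper: you identify $\mathcal{T}_{\mathrm{LoRA}}$ with the basis set by factoring $B$ and $A$ through $U$ and $V$ (using pseudoinverses for the reverse inclusion, where the paper uses full-row-rank factors $R_B$, $R_A$ and surjectivity), and you then obtain $\mathcal{G}_{\mathrm{LoRA}}$ from self-adjointness of $\Phi_s$, the kernel identity $\ker\Phi_s=\{H:U^{T}H=0,\ HV=0\}$, and $\operatorname{Range}(\Phi_s)=(\ker\Phi_s)^{\perp}$. Your separate inclusion $\mathcal{G}_{\mathrm{LoRA}}\subseteq\mathcal{T}_{\mathrm{LoRA}}$ via $\Phi_s(H)=J_s(sB^{T}H,\,sHA^{T})$ is valid but redundant, since the range--kernel argument already gives equality.
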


The proof is provided in Appendix~\ref{app:proof_gradient_space}.
This characterization identifies the orthogonal complement of the
first-order LoRA-accessible space used in the following decomposition.

\paragraph{Normal-gradient projection.}
The operator $\Phi_s$ is generally not an orthogonal projector.
Consequently, subtracting $G_{\mathrm{LoRA}}(G)$ from $G$ does not
generally produce a residual orthogonal to
$\mathcal{G}_{\mathrm{LoRA}}$.
The matrices $BB^{T}$ and $A^{T}A$ are also generally not orthogonal
projectors and therefore cannot directly serve as the column-
and row-space projectors required for this decomposition.

By Theorem~\ref{thm:lora_gradient_space}, the LoRA-accessible
gradient space is exactly the tangent space induced by the
current LoRA parameterization.
Thus, projecting $G$ onto $\mathcal{G}_{\mathrm{LoRA}}$ and its
orthogonal complement is equivalent to projecting it onto
$\mathcal{T}_{\mathrm{LoRA}}$ and
$\mathcal{T}_{\mathrm{LoRA}}^{\perp}$, respectively.
Using the orthonormal bases $U$ and $V$, the corresponding
column- and row-space projectors are $UU^{T}$ and $VV^{T}$,
yielding the orthogonal decomposition
\begin{align}
G &= G_{\parallel}+G_{\perp},\\
G_{\parallel}
&=UU^{T}G+GVV^{T}-UU^{T}GVV^{T},\\
G_{\perp}
&=(I-UU^{T})G(I-VV^{T}).
\label{eq:gdlora_normal_gradient}
\end{align}
These are the standard two-sided tangent and normal projectors in the
full-rank-factor case; the basis formulation
also covers rank-deficient factors under our first-order-space definition.
In particular,
\begin{equation}
G_{\parallel}\in\mathcal{T}_{\mathrm{LoRA}},\qquad
G_{\perp}\in\mathcal{T}_{\mathrm{LoRA}}^{\perp},\qquad
\langle G_{\perp},Z\rangle_F=0
\quad\forall Z\in\mathcal{T}_{\mathrm{LoRA}}.
\end{equation}
We call $G_{\perp}$ the \textbf{normal gradient}. A direct verification of
the projection is provided in Appendix~\ref{app:normal_projection}.

\paragraph{Why use the normal component?}
The normal update has a simple local optimization interpretation.
Let $D$ denote an additive update to $W_{\mathrm{base}}$.
With the adapters and all other weights fixed, the first-order
change in $\overline{\mathcal{L}}$ is $\langle G,D\rangle_F$.
We seek an update orthogonal to the current first-order adapter
space while penalizing its magnitude:
\begin{equation}
\underset{D\in\mathcal{T}_{\mathrm{LoRA}}^{\perp}}{\operatorname{arg\,min}}
\left\{\langle G,D\rangle_F
+\frac{1}{2\eta_n}\|D\|_F^2\right\}
=-\eta_nG_{\perp},\qquad \eta_n>0.
\label{eq:gdlora_normal_step_objective}
\end{equation}
To see this, orthogonality gives
$\langle G_{\parallel},D\rangle_F=0$ and hence
$\langle G,D\rangle_F=\langle G_{\perp},D\rangle_F$.
Completing the square yields
\begin{equation*}
\langle G,D\rangle_F+\frac{1}{2\eta_n}\|D\|_F^2
=
\frac{1}{2\eta_n}\|D+\eta_nG_{\perp}\|_F^2
-\frac{\eta_n}{2}\|G_{\perp}\|_F^2.
\end{equation*}
The last term is independent of $D$, and the squared term
is minimized at $D=-\eta_nG_{\perp}$.

Furthermore, treating $G_{\perp}$ as a fixed direction computed
at the current parameters, differentiability gives
\begin{equation}
\left.\frac{d}{d\eta}
\overline{\mathcal{L}}(W_{\mathrm{base}}-\eta G_{\perp},A,B)
\right|_{\eta=0}
=-\langle G,G_{\perp}\rangle_F
=-\|G_{\perp}\|_F^2.
\label{eq:gdlora_normal_descent}
\end{equation}
Thus, whenever $G_{\perp}\neq 0$, a sufficiently small step
along $-G_{\perp}$ decreases the current batch loss while
remaining orthogonal to the current first-order adapter space.
This local property motivates the decomposition.

\subsection{Parameter Update}
\label{sec:parameter_update}

At iteration $t$, GDLoRA computes $G_t$ and the projection bases from the
same pre-update parameters. The adapters follow standard AdamW using the
factor gradients in Eq.~(\ref{eq:gdlora_factor_gradients}):
\begin{equation}
(A_{t+1},B_{t+1})
=\operatorname{AdamW}(A_t,B_t;\eta_{a,t}),
\label{eq:gdlora_adapter_update}
\end{equation}
where $\eta_{a,t}$ is the adapter learning rate and the optimizer state is
implicit. The normal branch updates the base weight directly:
\begin{equation}
W_{\mathrm{base},t+1}
=W_{\mathrm{base},t}-\eta_{n,t}G_{\perp,t},
\label{eq:gdlora_base_update}
\end{equation}
where $\eta_{n,t}\geq 0$ is the normal-gradient learning rate.
Both updates are applied after backward propagation. Setting $\eta_{n,t}=0$ throughout recovers
standard LoRA under the same training configuration.

The normal-gradient branch introduces no additional
momentum or variance buffers for the base weights, preserving standard
LoRA's optimizer-state budget under matched adapter and optimizer
configurations. At the current iterate, the normal update is orthogonal
to all first-order weight-space directions induced by the LoRA factors.
The finite-step relationship between the two branches is discussed in
Appendix~\ref{app:gdlora_local_analysis}.

\section{Experiments}

We evaluate GDLoRA on natural language understanding, mathematical reasoning,
commonsense reasoning, and image classification, using RoBERTa, Llama-3-8B,
Gemma-7B, and ViT models, respectively. These experiments span GLUE,
mathematical and commonsense reasoning benchmarks, and eight image
classification datasets. Dataset details, adapter configurations, and
hyperparameters are provided in Appendix~\ref{app:experimental_details}.

\begin{table}[t]
    \centering
    \caption{
        Results on six GLUE tasks with RoBERTa-Base and RoBERTa-Large.
        All LoRA-based methods use rank 8.
        For each model, the best and second-best scores among all
        methods, including FFT, are shown in \textbf{bold} and
        \underline{underlined}, respectively.
    }
    \label{tab:glue_results}

    \footnotesize
    \setlength{\tabcolsep}{3.5pt}
    \renewcommand{\arraystretch}{1.15}

    \resizebox{1.0\linewidth}{!}{%
    \begin{tabular}{clccccccc}
        \toprule
        Model & Method & SST-2 & MRPC & CoLA & QNLI & RTE & STS-B & Avg. \\
        \midrule

        & FFT
        & \gluescore{\textbf{94.42}}{0.18}
        & \gluescore{\textbf{89.24}}{0.61}
        & \gluescore{\underline{64.53}}{1.08}
        & \gluescore{\underline{92.81}}{0.22}
        & \gluescore{\underline{78.96}}{0.84}
        & \gluescore{\textbf{91.24}}{0.19}
        & \glueavg{\underline{85.20}} \\
        \cmidrule(l){2-9}

        & LoRA
        & \gluescore{93.04}{0.27}
        & \gluescore{87.12}{0.74}
        & \gluescore{62.89}{1.31}
        & \gluescore{92.31}{0.33}
        & \gluescore{75.81}{1.12}
        & \gluescore{89.49}{0.24}
        & \glueavg{83.44} \\

        & PiSSA
        & \gluescore{93.15}{0.41}
        & \gluescore{87.09}{0.58}
        & \gluescore{60.98}{0.73}
        & \gluescore{92.02}{0.68}
        & \gluescore{77.98}{0.97}
        & \gluescore{90.32}{0.17}
        & \glueavg{83.59} \\

        & DoRA
        & \gluescore{93.03}{0.34}
        & \gluescore{87.25}{0.81}
        & \gluescore{62.77}{1.46}
        & \gluescore{92.38}{0.19}
        & \gluescore{76.53}{1.28}
        & \gluescore{89.54}{0.31}
        & \glueavg{83.58} \\

        & rsLoRA
        & \gluescore{93.06}{0.16}
        & \gluescore{87.73}{0.49}
        & \gluescore{61.62}{1.12}
        & \gluescore{92.77}{0.37}
        & \gluescore{77.62}{0.88}
        & \gluescore{89.87}{0.22}
        & \glueavg{83.78} \\

        & Fira
        & \gluescore{92.51}{0.29}
        & \gluescore{86.92}{0.67}
        & \gluescore{60.64}{1.24}
        & \gluescore{92.62}{0.26}
        & \gluescore{77.34}{1.05}
        & \gluescore{89.99}{0.18}
        & \glueavg{83.34} \\

        & LoRA-GA
        & \gluescore{94.15}{0.23}
        & \gluescore{87.01}{0.72}
        & \gluescore{62.16}{1.37}
        & \gluescore{92.26}{0.31}
        & \gluescore{76.47}{0.91}
        & \gluescore{89.98}{0.26}
        & \glueavg{83.67} \\

        & LoRA-Pro
        & \gluescore{\underline{94.38}}{0.31}
        & \gluescore{87.99}{0.86}
        & \gluescore{60.79}{1.58}
        & \gluescore{91.93}{0.42}
        & \gluescore{73.29}{1.34}
        & \gluescore{89.75}{0.29}
        & \glueavg{83.02} \\

        \rowcolor{gdloraBlue}
        \cellcolor{white}
        \multirow{-9}{*}{%
            \rotatebox[origin=c]{90}{RoBERTa-Base}%
        }
        & \textbf{GDLoRA}
        & \gluescore{94.19}{0.14}
        & \gluescore{\underline{88.97}}{0.64}
        & \gluescore{\textbf{65.49}}{1.05}
        & \gluescore{\textbf{92.93}}{0.18}
        & \gluescore{\textbf{79.42}}{0.96}
        & \gluescore{\underline{90.57}}{0.25}
        & \glueavg{\textbf{85.26}} \\

        \midrule

        & FFT
        & \gluescore{96.24}{0.17}
        & \gluescore{\textbf{91.52}}{0.95}
        & \gluescore{\underline{68.45}}{1.21}
        & \gluescore{\textbf{94.97}}{0.24}
        & \gluescore{\underline{87.93}}{0.93}
        & \gluescore{\textbf{92.67}}{0.32}
        & \glueavg{\underline{88.63}} \\
        \cmidrule(l){2-9}

        & LoRA
        & \gluescore{95.72}{0.41}
        & \gluescore{89.76}{0.92}
        & \gluescore{65.74}{1.63}
        & \gluescore{94.03}{0.36}
        & \gluescore{83.15}{1.27}
        & \gluescore{91.46}{0.47}
        & \glueavg{86.64} \\

        & PiSSA
        & \gluescore{\underline{96.33}}{0.22}
        & \gluescore{90.69}{0.63}
        & \gluescore{66.44}{0.97}
        & \gluescore{94.44}{0.29}
        & \gluescore{86.28}{0.88}
        & \gluescore{91.95}{0.18}
        & \glueavg{87.69} \\

        & DoRA
        & \gluescore{95.89}{0.35}
        & \gluescore{89.95}{0.84}
        & \gluescore{65.92}{1.42}
        & \gluescore{94.27}{0.21}
        & \gluescore{85.21}{1.14}
        & \gluescore{92.12}{0.39}
        & \glueavg{87.23} \\

        & rsLoRA
        & \gluescore{95.87}{0.26}
        & \gluescore{89.28}{0.71}
        & \gluescore{66.56}{1.18}
        & \gluescore{94.42}{0.33}
        & \gluescore{85.64}{1.02}
        & \gluescore{92.01}{0.27}
        & \glueavg{87.30} \\

        & Fira
        & \gluescore{95.76}{0.19}
        & \gluescore{89.05}{0.58}
        & \gluescore{65.78}{1.05}
        & \gluescore{94.51}{0.27}
        & \gluescore{85.28}{0.91}
        & \gluescore{91.95}{0.22}
        & \glueavg{87.06} \\

        & LoRA-GA
        & \gluescore{96.08}{0.28}
        & \gluescore{90.69}{0.76}
        & \gluescore{65.78}{1.36}
        & \gluescore{94.25}{0.31}
        & \gluescore{85.56}{1.09}
        & \gluescore{92.29}{0.34}
        & \glueavg{87.44} \\

        & LoRA-Pro
        & \gluescore{96.22}{0.32}
        & \gluescore{89.71}{0.88}
        & \gluescore{66.77}{1.49}
        & \gluescore{93.85}{0.43}
        & \gluescore{85.57}{1.21}
        & \gluescore{91.79}{0.41}
        & \glueavg{87.32} \\

        \rowcolor{gdloraBlue}
        \cellcolor{white}
        \multirow{-9}{*}{%
            \rotatebox[origin=c]{90}{RoBERTa-Large}%
        }
        & \textbf{GDLoRA}
        & \gluescore{\textbf{96.56}}{0.25}
        & \gluescore{\underline{90.93}}{0.49}
        & \gluescore{\textbf{68.73}}{1.08}
        & \gluescore{\underline{94.76}}{0.23}
        & \gluescore{\textbf{88.95}}{0.89}
        & \gluescore{\underline{92.31}}{0.36}
        & \glueavg{\textbf{88.71}} \\

        \bottomrule
    \end{tabular}%
    }
\end{table}

\subsection{Natural Language Understanding}
\label{sec:results_glue}

As shown in Table~\ref{tab:glue_results}, GDLoRA achieves the highest
average scores among all evaluated methods on RoBERTa-Base and
RoBERTa-Large, reaching 85.26 and 88.71, respectively.
Compared with standard LoRA at rank 8, the average improvements
are 1.82 and 2.07 points.
GDLoRA also surpasses the strongest non-FFT baseline for each
backbone, namely rsLoRA on RoBERTa-Base and PiSSA on
RoBERTa-Large, by 1.48 and 1.02 points, respectively.
Against Fira, the corresponding improvements are 1.92 and
1.65 points.
This advantage extends to individual tasks: GDLoRA outperforms
both standard LoRA and Fira in all 12 model--task combinations
and achieves the highest non-FFT score in 11 of them.
The largest gains over standard LoRA occur on RTE, reaching
3.61 and 5.80 points.
Its average scores closely match FFT's 85.20 and 88.63,
while the normal branch requires no additional base-weight
momentum or variance buffers.


\begin{table}[t]
    \centering
    \caption{
        Results on six mathematical reasoning tasks with Llama-3-8B.
        All LoRA-based methods use rank 8.
        The best and second-best scores among all methods, including
        FFT, are shown in \textbf{bold} and \underline{underlined},
        respectively.
    }
    \label{tab:math_results}

    \footnotesize
    \setlength{\tabcolsep}{3.5pt}
    \renewcommand{\arraystretch}{1.15}

    \resizebox{1.0\linewidth}{!}{%
    \begin{tabular}{lccccccc}
        \toprule
        Method & AddSub & MultiArith & SingleEq & SVAMP & GSM8K & AQuA & Avg. \\
        \midrule

        FFT
        & \gluescore{\textbf{86.78}}{1.37}
        & \gluescore{\underline{90.21}}{1.83}
        & \gluescore{\textbf{93.21}}{0.84}
        & \gluescore{\textbf{70.87}}{1.26}
        & \gluescore{\textbf{56.72}}{0.75}
        & \gluescore{\underline{25.96}}{1.54}
        & \glueavg{\textbf{70.63}} \\
        \midrule

        LoRA
        & \gluescore{81.52}{1.54}
        & \gluescore{87.28}{1.11}
        & \gluescore{91.54}{0.69}
        & \gluescore{66.93}{1.59}
        & \gluescore{55.34}{0.66}
        & \gluescore{22.44}{0.60}
        & \glueavg{67.51} \\

        PiSSA
        & \gluescore{82.78}{1.58}
        & \gluescore{86.89}{1.44}
        & \gluescore{91.27}{0.16}
        & \gluescore{66.40}{0.79}
        & \gluescore{54.11}{0.57}
        & \gluescore{25.33}{0.68}
        & \glueavg{67.80} \\

        DoRA
        & \gluescore{80.51}{1.98}
        & \gluescore{88.78}{0.86}
        & \gluescore{91.08}{1.40}
        & \gluescore{67.00}{1.61}
        & \gluescore{55.45}{0.80}
        & \gluescore{23.75}{1.27}
        & \glueavg{67.76} \\

        rsLoRA
        & \gluescore{81.34}{1.23}
        & \gluescore{86.15}{1.09}
        & \gluescore{90.23}{0.78}
        & \gluescore{67.24}{1.24}
        & \gluescore{54.21}{0.68}
        & \gluescore{24.72}{0.95}
        & \glueavg{67.32} \\

        Fira
        & \gluescore{80.64}{1.47}
        & \gluescore{86.69}{1.16}
        & \gluescore{90.39}{0.96}
        & \gluescore{67.64}{1.12}
        & \gluescore{54.18}{0.77}
        & \gluescore{24.57}{1.25}
        & \glueavg{67.35} \\

        LoRA-GA
        & \gluescore{82.05}{1.05}
        & \gluescore{89.36}{0.98}
        & \gluescore{91.59}{0.28}
        & \gluescore{68.56}{1.08}
        & \gluescore{55.78}{0.54}
        & \gluescore{23.19}{0.89}
        & \glueavg{68.42} \\

        LoRA-Pro
        & \gluescore{83.26}{1.68}
        & \gluescore{89.24}{1.21}
        & \gluescore{91.95}{1.06}
        & \gluescore{67.83}{0.86}
        & \gluescore{55.42}{0.77}
        & \gluescore{25.45}{0.94}
        & \glueavg{68.86} \\

        \rowcolor{gdloraBlue}
        \textbf{GDLoRA}
        & \gluescore{\underline{84.81}}{1.24}
        & \gluescore{\textbf{90.69}}{0.97}
        & \gluescore{\underline{92.52}}{0.59}
        & \gluescore{\underline{69.61}}{0.95}
        & \gluescore{\underline{56.03}}{0.61}
        & \gluescore{\textbf{26.38}}{0.88}
        & \glueavg{\underline{70.01}} \\

        \bottomrule
    \end{tabular}%
    }
\end{table}

\subsection{Mathematical Reasoning}
\label{sec:results_math}

Table~\ref{tab:math_results} shows that GDLoRA achieves an average
accuracy of 70.01, outperforming standard LoRA at rank 8 by
2.50 percentage points.
It also exceeds LoRA-Pro, the strongest non-FFT baseline,
by 1.15 points and Fira by 2.66 points.
GDLoRA obtains the highest non-FFT score on all six tasks,
showing consistent improvements over both adapter-based
baselines and Fira in this setting.
The gains over standard LoRA are particularly pronounced on
AQuA (+3.94 points), MultiArith (+3.41), and AddSub (+3.29).
Relative to FFT, GDLoRA reduces the average gap from
3.12 points for standard LoRA to 0.62 points, closing
approximately 80.1\% of the original LoRA--FFT gap.
It exceeds FFT on MultiArith and AQuA, while ranking second
on the remaining four tasks.
These results demonstrate broad improvements across the
evaluated mathematical reasoning tasks and substantially
narrow the performance gap to FFT.

\subsection{Commonsense Reasoning}
\label{sec:results_commonsense}

As reported in Table~\ref{tab:commonsense_results}, GDLoRA
achieves an average accuracy of 85.15, surpassing standard
LoRA by 1.75 percentage points and the strongest non-FFT
baseline, LoRA-Pro, by 1.16 points.
It also outperforms Fira by 2.26 points on average and
achieves higher scores on all eight tasks.
GDLoRA obtains the highest non-FFT score on six tasks,
with particularly large gains over standard LoRA on
OpenBookQA (+4.00 points), Social IQa (+3.07), and
ARC-Challenge (+2.41).
Compared with FFT's average accuracy of 85.70, GDLoRA
reduces the remaining difference to 0.55 points and achieves
higher scores on ARC-Challenge, Social IQa, and ARC-Easy.
LoRA-Pro and LoRA-GA retain small advantages on PIQA and
BoolQ, respectively, but GDLoRA delivers the strongest
aggregate performance among the non-FFT methods.
These results extend the reasoning improvements observed
on Llama-3-8B to Gemma-7B.

\subsection{Image Classification}
\label{sec:results_vision}

Table~\ref{tab:vision_results} shows that GDLoRA achieves
average accuracies of 86.79 and 89.20 on ViT-Base and
ViT-Large, respectively, obtaining the highest average
scores among all evaluated methods on both backbones.
These results exceed standard LoRA at rank 8 by 2.07 and
2.46 percentage points.
Compared with Fira, GDLoRA improves average accuracy by
1.64 and 1.87 points and achieves higher scores in all
16 model--dataset combinations.
It also surpasses PiSSA, the strongest non-FFT baseline
on ViT-Large, by 1.40 points.
The largest gains over standard LoRA occur on FGVC-Aircraft,
reaching 5.10 and 8.07 points.
StanfordCars also improves substantially, by 3.72 and
7.23 points, with both fine-grained recognition tasks
showing larger gains on ViT-Large.
The improvements on these two datasets exceed the
corresponding average gains for both backbones, highlighting
fine-grained recognition as a strong setting for GDLoRA
within the evaluated image classification suite.
GDLoRA's average accuracies are also comparable to FFT's
86.43 and 89.08, extending its competitive performance
to visual adaptation.

\begin{table}[t]
    \centering
    \caption{
        Results on eight commonsense reasoning tasks with Gemma-7B.
        All LoRA-based methods use rank 8.
        The best and second-best scores among all methods, including
        FFT, are shown in \textbf{bold} and \underline{underlined},
        respectively.
    }
    \label{tab:commonsense_results}

    \footnotesize
    \setlength{\tabcolsep}{3pt}
    \renewcommand{\arraystretch}{1.15}

    \resizebox{1.0\linewidth}{!}{%
    \begin{tabular}{lccccccccc}
        \toprule
        Method & OBQA & ARC-c & WinoGrande & PIQA & SIQA & ARC-e & BoolQ & HellaSwag & Avg. \\
        \midrule

        FFT
        & \gluescore{\textbf{86.77}}{1.46}
        & \gluescore{\underline{84.93}}{1.66}
        & \gluescore{\textbf{88.05}}{1.25}
        & \gluescore{\textbf{89.11}}{1.07}
        & \gluescore{\underline{80.65}}{1.43}
        & \gluescore{\underline{93.06}}{1.28}
        & \gluescore{\textbf{70.12}}{0.24}
        & \gluescore{\textbf{92.89}}{0.84}
        & \glueavg{\textbf{85.70}} \\
        \midrule

        LoRA
        & \gluescore{81.62}{0.82}
        & \gluescore{82.74}{0.89}
        & \gluescore{85.08}{0.88}
        & \gluescore{87.54}{0.65}
        & \gluescore{77.79}{1.14}
        & \gluescore{92.67}{1.01}
        & \gluescore{68.84}{0.25}
        & \gluescore{90.95}{0.65}
        & \glueavg{83.40} \\

        PiSSA
        & \gluescore{81.24}{0.57}
        & \gluescore{81.06}{1.06}
        & \gluescore{81.69}{0.73}
        & \gluescore{86.24}{0.48}
        & \gluescore{75.74}{1.28}
        & \gluescore{91.12}{0.84}
        & \gluescore{67.43}{0.31}
        & \gluescore{89.13}{0.79}
        & \glueavg{81.71} \\

        DoRA
        & \gluescore{82.53}{0.94}
        & \gluescore{81.95}{0.68}
        & \gluescore{83.58}{1.02}
        & \gluescore{86.82}{0.72}
        & \gluescore{76.93}{0.96}
        & \gluescore{91.16}{1.12}
        & \gluescore{68.45}{0.22}
        & \gluescore{83.56}{0.58}
        & \glueavg{81.87} \\

        rsLoRA
        & \gluescore{82.86}{0.71}
        & \gluescore{81.83}{1.14}
        & \gluescore{84.50}{0.79}
        & \gluescore{87.63}{0.54}
        & \gluescore{77.12}{1.21}
        & \gluescore{92.13}{0.93}
        & \gluescore{68.09}{0.37}
        & \gluescore{92.23}{0.83}
        & \glueavg{83.30} \\

        Fira
        & \gluescore{83.12}{1.03}
        & \gluescore{81.66}{0.74}
        & \gluescore{84.16}{0.91}
        & \gluescore{87.92}{0.60}
        & \gluescore{75.84}{1.35}
        & \gluescore{91.26}{0.76}
        & \gluescore{68.71}{0.29}
        & \gluescore{90.42}{0.52}
        & \glueavg{82.89} \\

        LoRA-GA
        & \gluescore{82.68}{0.63}
        & \gluescore{82.72}{0.97}
        & \gluescore{85.95}{1.08}
        & \gluescore{87.19}{0.69}
        & \gluescore{78.84}{0.88}
        & \gluescore{91.69}{1.15}
        & \gluescore{\underline{68.93}}{0.21}
        & \gluescore{91.34}{0.74}
        & \glueavg{83.67} \\

        LoRA-Pro
        & \gluescore{83.66}{0.89}
        & \gluescore{82.89}{0.59}
        & \gluescore{85.75}{0.95}
        & \gluescore{\underline{88.65}}{0.77}
        & \gluescore{78.69}{1.18}
        & \gluescore{91.86}{0.81}
        & \gluescore{68.56}{0.34}
        & \gluescore{91.89}{0.61}
        & \glueavg{83.99} \\

        \rowcolor{gdloraBlue}
        \textbf{GDLoRA}
        & \gluescore{\underline{85.62}}{0.76}
        & \gluescore{\textbf{85.15}}{1.09}
        & \gluescore{\underline{86.27}}{0.84}
        & \gluescore{88.36}{0.57}
        & \gluescore{\textbf{80.86}}{1.26}
        & \gluescore{\textbf{93.77}}{0.98}
        & \gluescore{68.89}{0.27}
        & \gluescore{\underline{92.28}}{0.71}
        & \glueavg{\underline{85.15}} \\

        \bottomrule
    \end{tabular}%
    }
\end{table}


\begin{table}[t]
    \centering
    \caption{
        Image classification results with ViT-Base and ViT-Large.
        For each model, the best and second-best scores among all
        methods are shown in \textbf{bold} and
        \underline{underlined}, respectively.
    }
    \label{tab:vision_results}

    \footnotesize
    \setlength{\tabcolsep}{3pt}
    \renewcommand{\arraystretch}{1.15}

    \resizebox{1.0\linewidth}{!}{%
    \begin{tabular}{clccccccccc}
        \toprule
        Model & Method & OxfordPets & StanfordCars & CIFAR-10 & DTD & EuroSAT & FGVC & RESISC45 & CIFAR-100 & Avg. \\
        \midrule

        & FFT
        & \gluescore{\underline{96.13}}{0.21}
        & \gluescore{\underline{76.81}}{0.94}
        & \gluescore{\textbf{98.96}}{0.04}
        & \gluescore{\underline{77.75}}{0.88}
        & \gluescore{\underline{98.92}}{0.19}
        & \gluescore{\textbf{54.64}}{0.97}
        & \gluescore{\textbf{96.12}}{0.34}
        & \gluescore{\textbf{92.11}}{0.18}
        & \glueavg{\underline{86.43}} \\
        \cmidrule(l){2-11}

        & LoRA
        & \gluescore{95.92}{0.37}
        & \gluescore{74.93}{0.58}
        & \gluescore{97.26}{0.06}
        & \gluescore{75.03}{0.54}
        & \gluescore{98.57}{0.12}
        & \gluescore{49.17}{0.71}
        & \gluescore{95.43}{0.29}
        & \gluescore{91.44}{0.22}
        & \glueavg{84.72} \\

        & PiSSA
        & \gluescore{95.46}{0.18}
        & \gluescore{74.24}{0.83}
        & \gluescore{97.92}{0.03}
        & \gluescore{74.82}{0.97}
        & \gluescore{98.58}{0.05}
        & \gluescore{51.07}{0.52}
        & \gluescore{95.12}{0.17}
        & \gluescore{90.14}{0.13}
        & \glueavg{84.67} \\

        & DoRA
        & \gluescore{95.62}{0.29}
        & \gluescore{74.69}{0.41}
        & \gluescore{97.64}{0.07}
        & \gluescore{75.77}{0.63}
        & \gluescore{98.76}{0.11}
        & \gluescore{49.81}{0.86}
        & \gluescore{95.59}{0.23}
        & \gluescore{91.32}{0.20}
        & \glueavg{84.90} \\

        & rsLoRA
        & \gluescore{95.51}{0.33}
        & \gluescore{74.12}{0.69}
        & \gluescore{97.66}{0.05}
        & \gluescore{75.87}{0.45}
        & \gluescore{98.49}{0.08}
        & \gluescore{50.78}{0.61}
        & \gluescore{95.48}{0.12}
        & \gluescore{91.08}{0.16}
        & \glueavg{84.87} \\

        & Fira
        & \gluescore{95.06}{0.16}
        & \gluescore{75.28}{1.07}
        & \gluescore{98.54}{0.02}
        & \gluescore{76.19}{0.76}
        & \gluescore{98.87}{0.06}
        & \gluescore{51.22}{0.74}
        & \gluescore{95.08}{0.31}
        & \gluescore{90.95}{0.24}
        & \glueavg{85.15} \\

        & LoRA-GA
        & \gluescore{95.74}{0.42}
        & \gluescore{74.71}{0.35}
        & \gluescore{98.08}{0.07}
        & \gluescore{76.45}{0.59}
        & \gluescore{98.48}{0.13}
        & \gluescore{50.21}{0.49}
        & \gluescore{95.33}{0.20}
        & \gluescore{91.31}{0.19}
        & \glueavg{85.04} \\

        & LoRA-Pro
        & \gluescore{95.92}{0.24}
        & \gluescore{74.23}{0.92}
        & \gluescore{98.18}{0.04}
        & \gluescore{77.66}{1.14}
        & \gluescore{98.33}{0.10}
        & \gluescore{50.81}{1.08}
        & \gluescore{95.17}{0.27}
        & \gluescore{90.52}{0.15}
        & \glueavg{85.10} \\

        \rowcolor{gdloraBlue}
        \cellcolor{white}
        \multirow{-9}{*}{%
            \rotatebox[origin=c]{90}{ViT-Base}%
        }
        & \textbf{GDLoRA}
        & \gluescore{\textbf{97.01}}{0.15}
        & \gluescore{\textbf{78.65}}{0.47}
        & \gluescore{\underline{98.86}}{0.03}
        & \gluescore{\textbf{78.78}}{0.69}
        & \gluescore{\textbf{98.96}}{0.07}
        & \gluescore{\underline{54.27}}{0.65}
        & \gluescore{\underline{95.73}}{0.26}
        & \gluescore{\underline{92.02}}{0.21}
        & \glueavg{\textbf{86.79}} \\

        \midrule

        & FFT
        & \gluescore{\textbf{97.44}}{0.41}
        & \gluescore{\underline{85.87}}{0.84}
        & \gluescore{99.15}{0.04}
        & \gluescore{\textbf{81.17}}{0.72}
        & \gluescore{\underline{99.04}}{0.08}
        & \gluescore{\underline{60.17}}{1.12}
        & \gluescore{\textbf{96.21}}{0.09}
        & \gluescore{\underline{93.55}}{0.17}
        & \glueavg{\underline{89.08}} \\
        \cmidrule(l){2-11}

        & LoRA
        & \gluescore{95.92}{0.19}
        & \gluescore{78.91}{0.58}
        & \gluescore{99.16}{0.03}
        & \gluescore{79.12}{0.44}
        & \gluescore{98.35}{0.05}
        & \gluescore{53.59}{0.63}
        & \gluescore{95.56}{0.13}
        & \gluescore{93.34}{0.23}
        & \glueavg{86.74} \\

        & PiSSA
        & \gluescore{96.24}{0.33}
        & \gluescore{82.21}{1.29}
        & \gluescore{99.19}{0.13}
        & \gluescore{80.43}{0.91}
        & \gluescore{98.78}{0.07}
        & \gluescore{56.62}{1.27}
        & \gluescore{95.73}{0.08}
        & \gluescore{93.17}{0.14}
        & \glueavg{87.80} \\

        & DoRA
        & \gluescore{95.91}{0.27}
        & \gluescore{78.28}{0.66}
        & \gluescore{99.11}{0.07}
        & \gluescore{78.88}{0.35}
        & \gluescore{98.48}{0.09}
        & \gluescore{53.92}{0.52}
        & \gluescore{95.76}{0.12}
        & \gluescore{93.24}{0.18}
        & \glueavg{86.70} \\

        & rsLoRA
        & \gluescore{96.25}{0.52}
        & \gluescore{81.66}{0.47}
        & \gluescore{99.15}{0.13}
        & \gluescore{80.05}{0.63}
        & \gluescore{98.65}{0.04}
        & \gluescore{55.42}{0.88}
        & \gluescore{\underline{96.08}}{0.07}
        & \gluescore{93.15}{0.21}
        & \glueavg{87.55} \\

        & Fira
        & \gluescore{94.89}{0.14}
        & \gluescore{82.21}{0.61}
        & \gluescore{\underline{99.23}}{0.05}
        & \gluescore{80.21}{0.49}
        & \gluescore{98.77}{0.08}
        & \gluescore{55.52}{0.41}
        & \gluescore{95.24}{0.14}
        & \gluescore{92.56}{0.09}
        & \glueavg{87.33} \\

        & LoRA-GA
        & \gluescore{96.47}{0.24}
        & \gluescore{78.77}{0.38}
        & \gluescore{99.22}{0.08}
        & \gluescore{79.04}{0.83}
        & \gluescore{98.48}{0.03}
        & \gluescore{54.79}{0.74}
        & \gluescore{95.49}{0.11}
        & \gluescore{93.24}{0.16}
        & \glueavg{86.94} \\

        & LoRA-Pro
        & \gluescore{96.65}{0.08}
        & \gluescore{78.55}{0.54}
        & \gluescore{99.12}{0.09}
        & \gluescore{79.31}{0.57}
        & \gluescore{98.41}{0.12}
        & \gluescore{55.77}{0.97}
        & \gluescore{95.08}{0.10}
        & \gluescore{\textbf{93.76}}{0.12}
        & \glueavg{87.08} \\

        \rowcolor{gdloraBlue}
        \cellcolor{white}
        \multirow{-9}{*}{%
            \rotatebox[origin=c]{90}{ViT-Large}%
        }
        & \textbf{GDLoRA}
        & \gluescore{\underline{97.27}}{0.36}
        & \gluescore{\textbf{86.14}}{0.31}
        & \gluescore{\textbf{99.28}}{0.03}
        & \gluescore{\underline{80.74}}{0.68}
        & \gluescore{\textbf{99.16}}{0.07}
        & \gluescore{\textbf{61.66}}{1.35}
        & \gluescore{95.92}{0.13}
        & \gluescore{93.41}{0.15}
        & \glueavg{\textbf{89.20}} \\

        \bottomrule
    \end{tabular}%
    }
\end{table}

\subsection{Ablation Studies and Analysis}
\label{sec:ablation_analysis}

\begin{figure*}[t]
    \centering
    \begin{subfigure}[t]{0.33\linewidth}
        \centering
        \includegraphics[width=1.0\linewidth]
            {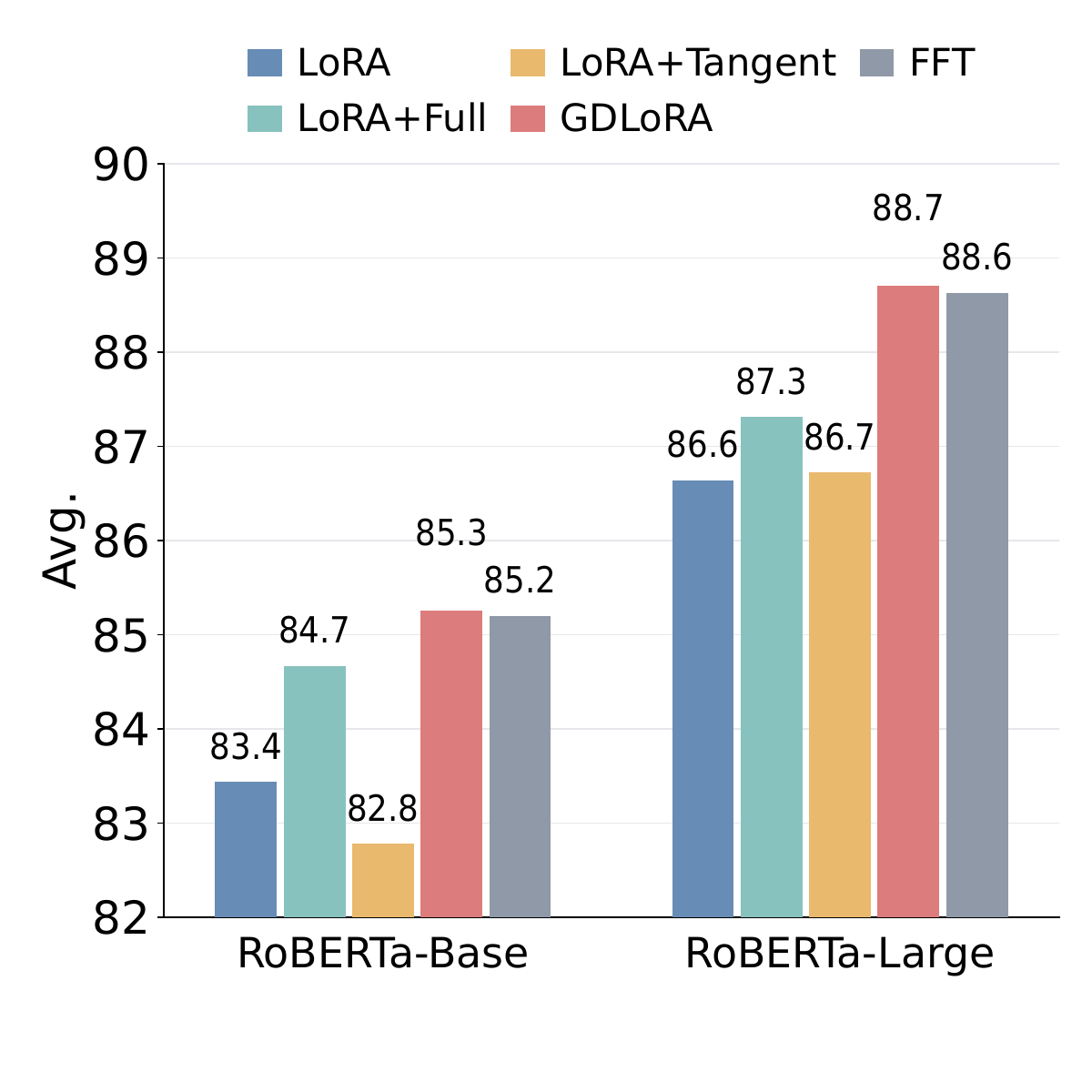}
        \caption{Gradient component ablation}
        \label{fig:gdlora_glue_ablation}
    \end{subfigure}\hfill%
    \begin{subfigure}[t]{0.33\linewidth}
        \centering
        \includegraphics[width=1.0\linewidth]
            {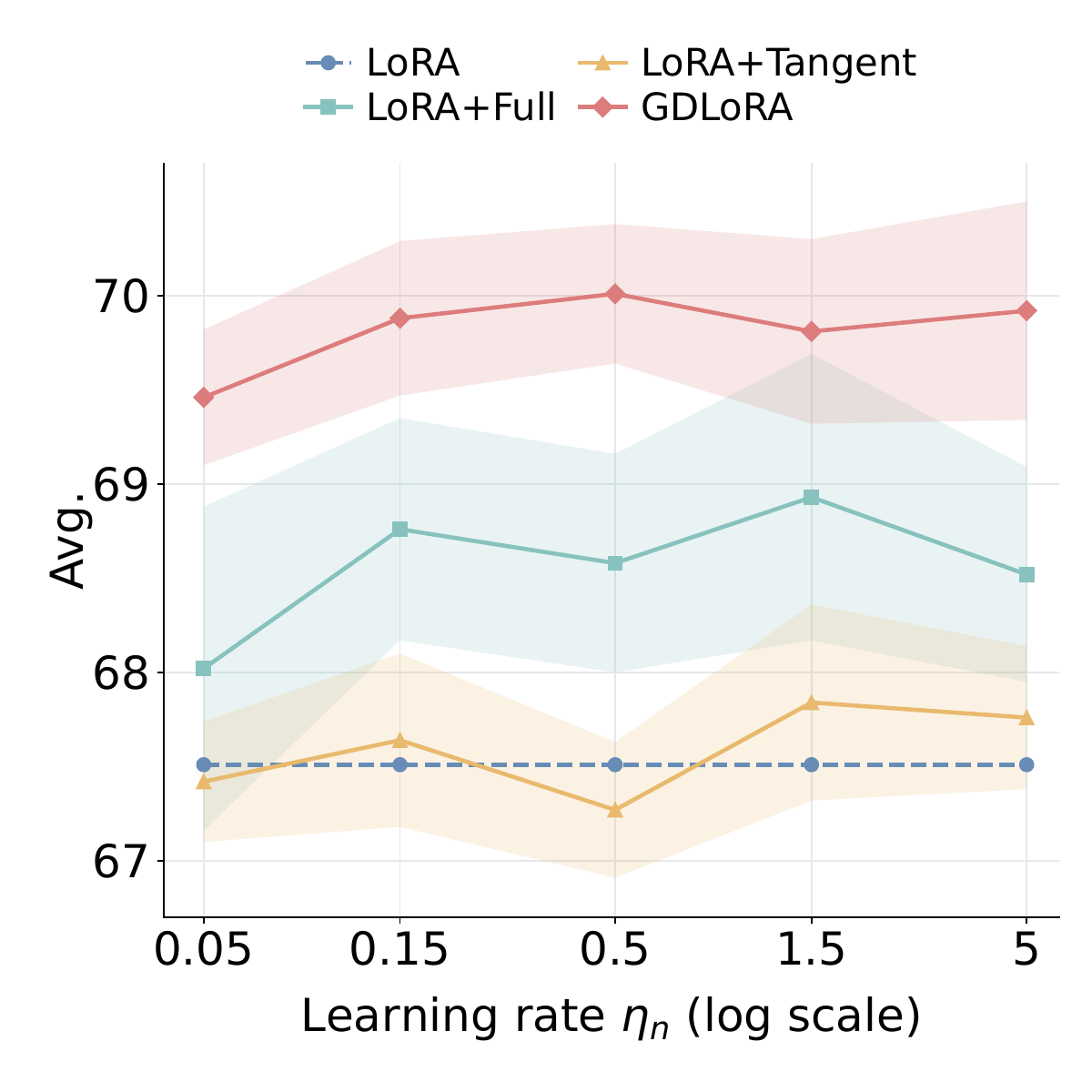}
        \caption{Learning-rate sensitivity}
        \label{fig:gdlora_lr_sensitivity}
    \end{subfigure}\hfill%
    \begin{subfigure}[t]{0.33\linewidth}
        \centering
        \includegraphics[width=1.0\linewidth]
            {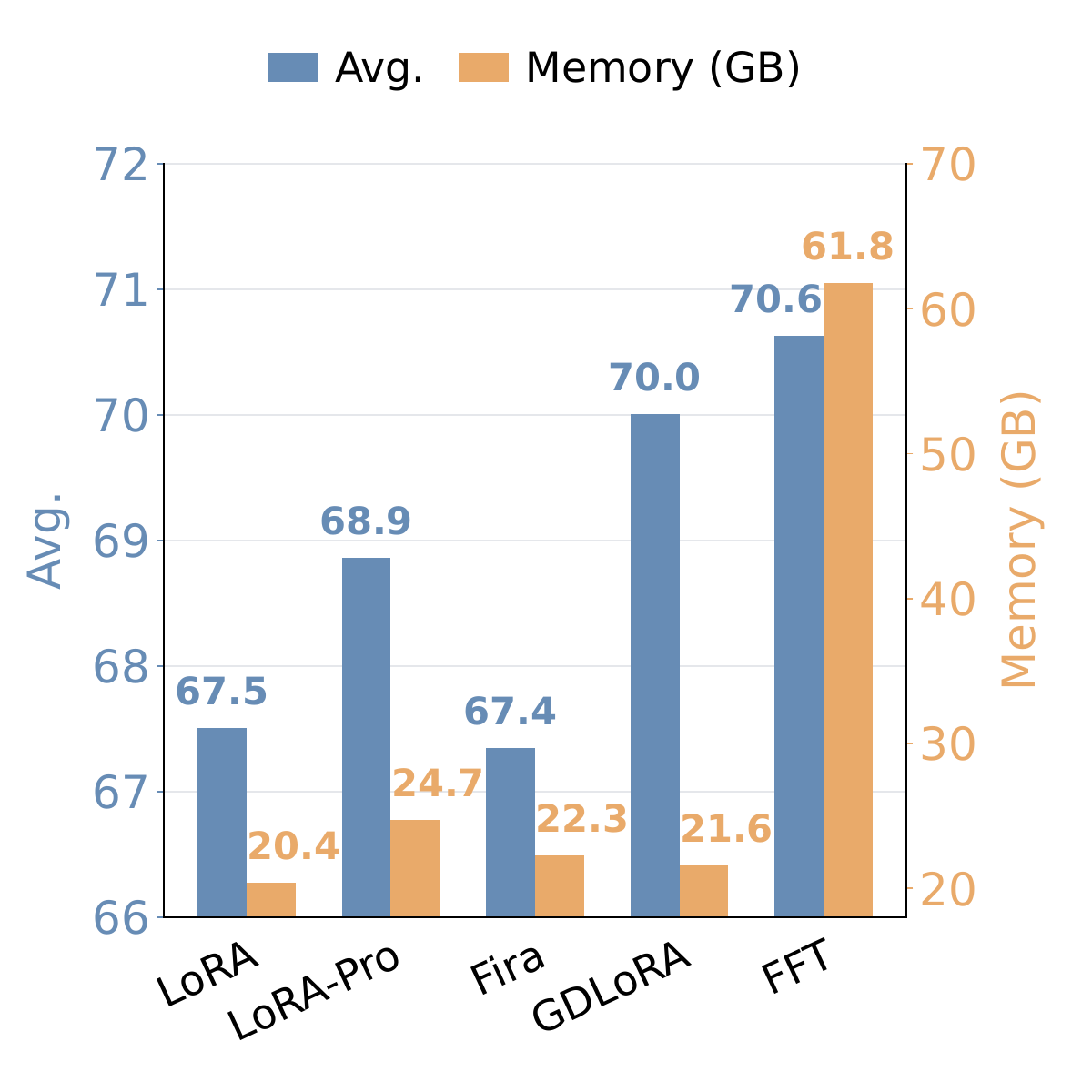}
        \caption{Accuracy and memory}
        \label{fig:gdlora_accuracy_memory}
    \end{subfigure}

    \caption{
        \textbf{Ablation studies and analysis of GDLoRA.}
        (a) Comparison of gradient components on six GLUE tasks
        with RoBERTa-Base and RoBERTa-Large.
        (b) Sensitivity to the base-update learning rate when
        fine-tuning Llama-3-8B on Math10K.
        Shaded bands indicate $\pm1$ standard deviation.
        (c) Average mathematical reasoning accuracy and memory usage
        with Llama-3-8B on Math10K.
    }
    \label{fig:gdlora_analysis}
\end{figure*}

\paragraph{Gradient component ablation.}
Figure~\ref{fig:gdlora_analysis}(a) evaluates the effect of the
gradient component used to update the base weights.
We compare GDLoRA with standard LoRA, LoRA+Full, and LoRA+Tangent,
where the latter two apply the full gradient and its tangent
component to the base weights, respectively.
GDLoRA improves the average GLUE score over LoRA by 1.82 and
2.07 points on RoBERTa-Base and RoBERTa-Large, respectively.
It also outperforms LoRA+Full by 0.59 and 1.40 points.
LoRA+Tangent reduces performance on RoBERTa-Base and yields
only a marginal improvement on RoBERTa-Large.
Tangent updates introduce no
directions outside the adapters' current first-order accessible
space. LoRA+Full also modifies updates within this space,
potentially interacting with the adapter optimizer.
GDLoRA isolates the normal component, supplying complementary
gradient information that the current adapters cannot represent
at first order.
The observed advantage is consistent with the benefit of this
complementary update, although overlap alone does not imply
that tangent updates are inherently harmful.
GDLoRA also achieves average scores numerically comparable
to FFT on both backbones.

\paragraph{Learning-rate sensitivity.}
Figure~\ref{fig:gdlora_analysis}(b) examines sensitivity to the
base-update learning rate on Llama-3-8B fine-tuned on Math10K.
We vary $\eta_n\in\{0.05,0.15,0.5,1.5,5\}$ for the methods
with additional base-weight updates, while standard LoRA
serves as a fixed baseline.
GDLoRA achieves mean accuracies between 69.46 and 70.01,
with a variation of 0.55 percentage points across the
tested range.
Its improvements over LoRA range from 1.95 to 2.50 percentage
points, with the highest accuracy obtained at $\eta_n=0.5$.
Moreover, its lowest mean score exceeds the best LoRA+Full mean score
of 68.93.
These results suggest that GDLoRA's gains persist across
the evaluated learning-rate range and do not depend on
a single narrowly selected setting.

\paragraph{Accuracy and memory.}
Figure~\ref{fig:gdlora_analysis}(c) compares GPU memory usage
during fine-tuning of Llama-3-8B on Math10K and the resulting
average accuracy across six mathematical reasoning tasks.
GDLoRA achieves an average accuracy of 70.01 with 21.6\,GB
of memory, improving over LoRA by 2.50 percentage points
while requiring only 1.2\,GB (5.9\%) additional memory.
Compared with LoRA-Pro and Fira, GDLoRA improves accuracy
by 1.15 and 2.66 percentage points while reducing memory
usage by 12.6\% and 3.1\%, respectively.
Relative to FFT, GDLoRA reduces memory usage from
61.79\,GB to 21.6\,GB, a reduction of 65.0\%,
while trailing by 0.62 percentage points in accuracy.
Thus, GDLoRA narrows the accuracy gap to FFT from
3.12 to 0.62 percentage points while incurring a modest
memory increase over standard LoRA.

\section{Conclusion}
\label{sec:conclusion}

We presented GDLoRA, which complements standard LoRA with gradient
information beyond the first-order directions accessible through its
current parameterization. By characterizing the LoRA-accessible gradient
space as the induced tangent space, GDLoRA isolates the normal component
of the full weight gradient and applies it directly to the base weights,
while retaining standard AdamW optimization for the adapters.
Under matched adapter and optimizer configurations, GDLoRA preserves
LoRA's optimizer-state memory budget.
Experiments on natural language understanding, mathematical reasoning,
commonsense reasoning, and image classification demonstrate consistent
improvements over LoRA and a reduced performance gap to FFT.
Ablation studies further support the effectiveness of normal-gradient
updates in complementing low-rank adaptation.

\bibliography{iclr2027_conference}

@inproceedings{devlin2019bert,
  title={Bert: Pre-training of deep bidirectional transformers for language understanding},
  author={Devlin, Jacob and Chang, Ming-Wei and Lee, Kenton and Toutanova, Kristina},
  booktitle={Proceedings of the 2019 conference of the North American chapter of the association for computational linguistics: human language technologies, volume 1 (long and short papers)},
  pages={4171--4186},
  year={2019}
}

@article{brown2020language,
  title={Language models are few-shot learners},
  author={Brown, Tom and Mann, Benjamin and Ryder, Nick and Subbiah, Melanie and Kaplan, Jared D and Dhariwal, Prafulla and Neelakantan, Arvind and Shyam, Pranav and Sastry, Girish and Askell, Amanda and others},
  journal={Advances in neural information processing systems},
  volume={33},
  pages={1877--1901},
  year={2020}
}

@article{chowdhery2023palm,
  title={Palm: Scaling language modeling with pathways},
  author={Chowdhery, Aakanksha and Narang, Sharan and Devlin, Jacob and Bosma, Maarten and Mishra, Gaurav and Roberts, Adam and Barham, Paul and Chung, Hyung Won and Sutton, Charles and Gehrmann, Sebastian and others},
  journal={Journal of machine learning research},
  volume={24},
  number={240},
  pages={1--113},
  year={2023}
}

@inproceedings{rajbhandari2020zero,
  title={Zero: Memory optimizations toward training trillion parameter models},
  author={Rajbhandari, Samyam and Rasley, Jeff and Ruwase, Olatunji and He, Yuxiong},
  booktitle={SC20: international conference for high performance computing, networking, storage and analysis},
  pages={1--16},
  year={2020},
  organization={IEEE}
}

@inproceedings{houlsby2019parameter,
  title={Parameter-efficient transfer learning for NLP},
  author={Houlsby, Neil and Giurgiu, Andrei and Jastrzebski, Stanislaw and Morrone, Bruna and De Laroussilhe, Quentin and Gesmundo, Andrea and Attariyan, Mona and Gelly, Sylvain},
  booktitle={International conference on machine learning},
  pages={2790--2799},
  year={2019},
  organization={PMLR}
}

@inproceedings{pfeiffer2020adapterhub,
  title={Adapterhub: A framework for adapting transformers},
  author={Pfeiffer, Jonas and R{\"u}ckl{\'e}, Andreas and Poth, Clifton and Kamath, Aishwarya and Vuli{\'c}, Ivan and Ruder, Sebastian and Cho, Kyunghyun and Gurevych, Iryna},
  booktitle={Proceedings of the 2020 conference on empirical methods in natural language processing: system demonstrations},
  pages={46--54},
  year={2020}
}

@inproceedings{li2021prefix,
  title={Prefix-tuning: Optimizing continuous prompts for generation},
  author={Li, Xiang Lisa and Liang, Percy},
  booktitle={Proceedings of the 59th annual meeting of the association for computational linguistics and the 11th international joint conference on natural language processing (volume 1: Long papers)},
  pages={4582--4597},
  year={2021}
}

@inproceedings{lester2021power,
  title={The power of scale for parameter-efficient prompt tuning},
  author={Lester, Brian and Al-Rfou, Rami and Constant, Noah},
  booktitle={Proceedings of the 2021 conference on empirical methods in natural language processing},
  pages={3045--3059},
  year={2021}
}

@article{hu2021lora,
  title={Lora: Low-rank adaptation of large language models},
  author={Hu, Edward J and Shen, Yelong and Wallis, Phillip and Allen-Zhu, Zeyuan and Li, Yuanzhi and Wang, Shean and Wang, Lu and Chen, Weizhu},
  journal={arXiv preprint arXiv:2106.09685},
  year={2021}
}

@article{wang2024lora,
  title={Lora-ga: Low-rank adaptation with gradient approximation},
  author={Wang, Shaowen and Yu, Linxi and Li, Jian},
  journal={Advances in Neural Information Processing Systems},
  volume={37},
  pages={54905--54931},
  year={2024}
}

@article{zhang2025lora,
  title={Lora-one: One-step full gradient could suffice for fine-tuning large language models, provably and efficiently},
  author={Zhang, Yuanhe and Liu, Fanghui and Chen, Yudong},
  journal={arXiv preprint arXiv:2502.01235},
  year={2025}
}

@inproceedings{wang2025lora,
  title={Lora-pro: Are low-rank adapters properly optimized?},
  author={Wang, Zhengbo and Liang, Jian and He, Ran and Wang, Zilei and Tan, Tieniu},
  booktitle={International Conference on Learning Representations},
  volume={2025},
  pages={93787--93808},
  year={2025}
}

@article{meng2024pissa,
  title={Pissa: Principal singular values and singular vectors adaptation of large language models},
  author={Meng, Fanxu and Wang, Zhaohui and Zhang, Muhan},
  journal={Advances in Neural Information Processing Systems},
  volume={37},
  pages={121038--121072},
  year={2024}
}

@article{hayou2024lora+,
  title={Lora+: Efficient low rank adaptation of large models},
  author={Hayou, Soufiane and Ghosh, Nikhil and Yu, Bin},
  journal={arXiv preprint arXiv:2402.12354},
  year={2024}
}

@article{kalajdzievski2023rank,
  title={A rank stabilization scaling factor for fine-tuning with lora},
  author={Kalajdzievski, Damjan},
  journal={arXiv preprint arXiv:2312.03732},
  year={2023}
}

@article{zhang2023adalora,
  title={Adalora: Adaptive budget allocation for parameter-efficient fine-tuning},
  author={Zhang, Qingru and Chen, Minshuo and Bukharin, Alexander and Karampatziakis, Nikos and He, Pengcheng and Cheng, Yu and Chen, Weizhu and Zhao, Tuo},
  journal={arXiv preprint arXiv:2303.10512},
  year={2023}
}

@article{liu2024dora,
  title={Dora: Weight-decomposed low-rank adaptation},
  author={Liu, Shih-Yang and Wang, Chien-Yi and Yin, Hongxu and Molchanov, Pavlo and Wang, Yu-Chiang Frank and Cheng, Kwang-Ting and Chen, Min-Hung},
  journal={arXiv preprint arXiv:2402.09353},
  year={2024}
}

@inproceedings{Wang2018GLUEAM,
  title={GLUE: A Multi-Task Benchmark and Analysis Platform for Natural Language Understanding},
  author={Alex Wang and Amanpreet Singh and Julian Michael and Felix Hill and Omer Levy and Samuel R. Bowman},
  booktitle={BlackboxNLP@EMNLP},
  year={2018},
  url={https://api.semanticscholar.org/CorpusID:5034059}
}

@article{Liu2019RoBERTaAR,
  title={RoBERTa: A Robustly Optimized BERT Pretraining Approach},
  author={Yinhan Liu and Myle Ott and Naman Goyal and Jingfei Du and Mandar Joshi and Danqi Chen and Omer Levy and Mike Lewis and Luke Zettlemoyer and Veselin Stoyanov},
  journal={ArXiv},
  year={2019},
  volume={abs/1907.11692},
  url={https://api.semanticscholar.org/CorpusID:198953378}
}

@inproceedings{Dubey2024TheL3,
  title={The Llama 3 Herd of Models},
  author={Dubey, Abhimanyu and others},
  year={2024},
  url={https://api.semanticscholar.org/CorpusID:271571434}
}

@article{Mesnard2024GemmaOM,
  title={Gemma: Open Models Based on Gemini Research and Technology},
  author={Gemma Team Thomas Mesnard and others},
  journal={ArXiv},
  year={2024},
  volume={abs/2403.08295},
  url={https://api.semanticscholar.org/CorpusID:268379206}
}

@article{Hu2023LLMAdaptersAA,
  title={LLM-Adapters: An Adapter Family for Parameter-Efficient Fine-Tuning of Large Language Models},
  author={Zhiqiang Hu and Yihuai Lan and Lei Wang and Wanyu Xu and Ee-Peng Lim and Roy Ka-Wei Lee and Lidong Bing and Soujanya Poria},
  journal={ArXiv},
  year={2023},
  volume={abs/2304.01933},
  url={https://api.semanticscholar.org/CorpusID:257921386}
}

@article{Dosovitskiy2020AnII,
  title={An Image is Worth 16x16 Words: Transformers for Image Recognition at Scale},
  author={Alexey Dosovitskiy and Lucas Beyer and Alexander Kolesnikov and Dirk Weissenborn and Xiaohua Zhai and Thomas Unterthiner and Mostafa Dehghani and Matthias Minderer and Georg Heigold and Sylvain Gelly and Jakob Uszkoreit and Neil Houlsby},
  journal={ArXiv},
  year={2020},
  volume={abs/2010.11929},
  url={https://api.semanticscholar.org/CorpusID:225039882}
}

@inproceedings{krause20133d,
  title={3d object representations for fine-grained categorization},
  author={Krause, Jonathan and Stark, Michael and Deng, Jia and Fei-Fei, Li},
  booktitle={Proceedings of the IEEE international conference on computer vision workshops},
  pages={554--561},
  year={2013}
}

@inproceedings{cimpoi2014describing,
  title={Describing textures in the wild},
  author={Cimpoi, Mircea and Maji, Subhransu and Kokkinos, Iasonas and Mohamed, Sammy and Vedaldi, Andrea},
  booktitle={Proceedings of the IEEE conference on computer vision and pattern recognition},
  pages={3606--3613},
  year={2014}
}

@article{helber2019eurosat,
  title={Eurosat: A novel dataset and deep learning benchmark for land use and land cover classification},
  author={Helber, Patrick and Bischke, Benjamin and Dengel, Andreas and Borth, Damian},
  journal={IEEE Journal of Selected Topics in Applied Earth Observations and Remote Sensing},
  volume={12},
  number={7},
  pages={2217--2226},
  year={2019},
  publisher={IEEE}
}

@article{cheng2017remote,
  title={Remote sensing image scene classification: Benchmark and state of the art},
  author={Cheng, Gong and Han, Junwei and Lu, Xiaoqiang},
  journal={Proceedings of the IEEE},
  volume={105},
  number={10},
  pages={1865--1883},
  year={2017},
  publisher={IEEE}
}

@inproceedings{parkhi2012cats,
  title={Cats and dogs},
  author={Parkhi, Omkar M and Vedaldi, Andrea and Zisserman, Andrew and Jawahar, CV},
  booktitle={2012 IEEE conference on computer vision and pattern recognition},
  pages={3498--3505},
  year={2012},
  organization={IEEE}
}

@article{krizhevsky2009learning,
  title={Learning multiple layers of features from tiny images},
  author={Krizhevsky, Alex and Hinton, Geoffrey and others},
  year={2009},
  publisher={Toronto, ON, Canada}
}

@article{maji2013fine,
  title={Fine-grained visual classification of aircraft},
  author={Maji, Subhransu and Rahtu, Esa and Kannala, Juho and Blaschko, Matthew and Vedaldi, Andrea},
  journal={arXiv preprint arXiv:1306.5151},
  year={2013}
}

@article{Chen2024FiraCW,
  title={Fira: Can We Achieve Full-rank Training of LLMs Under Low-rank Constraint?},
  author={Xi Chen and Kaituo Feng and Chang-Sheng Li and Xunhao Lai and Xiangyu Yue and Ye Yuan and Guoren Wang},
  journal={ArXiv},
  year={2024},
  volume={abs/2410.01623},
  url={https://api.semanticscholar.org/CorpusID:273026172}
}

@article{Zhao2024GaLoreML,
  title={GaLore: Memory-Efficient LLM Training by Gradient Low-Rank Projection},
  author={Jiawei Zhao and Zhenyu (Allen) Zhang and Beidi Chen and Zhangyang Wang and Anima Anandkumar and Yuandong Tian},
  journal={ArXiv},
  year={2024},
  volume={abs/2403.03507},
  url={https://api.semanticscholar.org/CorpusID:268253596}
}

@article{Liu2022FewShotPF,
  title={Few-Shot Parameter-Efficient Fine-Tuning is Better and Cheaper than In-Context Learning},
  author={Haokun Liu and Derek Tam and Mohammed Muqeeth and Jay Mohta and Tenghao Huang and Mohit Bansal and Colin Raffel},
  journal={ArXiv},
  year={2022},
  volume={abs/2205.05638},
  url={https://api.semanticscholar.org/CorpusID:248693283}
}

@article{Gao2024ParameterEfficientFW,
  title={Parameter-Efficient Fine-Tuning with Discrete Fourier Transform},
  author={Ziqi Gao and Qi-Chao Wang and Aochuan Chen and Zijing Liu and Bingzhe Wu and Liang Chen and Jia Li},
  journal={ArXiv},
  year={2024},
  volume={abs/2405.03003},
  url={https://api.semanticscholar.org/CorpusID:269605083}
}

@inproceedings{Kang2024MiSSRT,
  title={MiSS: Revisiting the Trade-off in LoRA with an Efficient Shard-Sharing Structure},
  author={Jiale Kang and Qingyu Yin},
  year={2024},
  url={https://api.semanticscholar.org/CorpusID:272832329}
}
\bibliographystyle{iclr2027_conference}

\appendix

\section{Gradient Subspace Drift and Projection Refresh Costs}
\label{app:projection_refresh}

We investigate the trade-off between the temporal relevance of
gradient projection bases and the computational cost of refreshing
them. Our comparison uses native Fira \citep{Chen2024FiraCW}, which
directly updates the base weights using low-rank adaptive
optimization and scaled residual updates.
We examine gradient subspace variation across training steps
and network modules, and evaluate how the projection refresh
interval affects accuracy and training time.

\paragraph{Gradient subspace drift and projection refresh.}
SVD-based gradient projection methods, including GaLore
\citep{Zhao2024GaLoreML} and Fira, periodically construct projection
bases from full weight gradients and reuse them between refreshes.
Let $G_t$ denote the full weight gradient at step $t$, and let
$\tau(t)$ denote the most recent projection refresh step.
For a left-sided projection with orthonormal basis
$P_{\tau(t)}$, the projected gradient and unscaled residual are
\begin{equation}
R_t=P_{\tau(t)}^{T}G_t,
\qquad
S_t=(I-P_{\tau(t)}P_{\tau(t)}^{T})G_t.
\label{eq:refresh_projected_gradient}
\end{equation}
The projected gradient, residual, and low-dimensional optimizer
moments continue to update at every step.
However, the subspace receiving adaptive optimization remains
fixed until the next basis refresh.

To examine the temporal relevance of this subspace, we record
the full weight gradient of
\texttt{model.layers.15.self\_attn.v\_proj}
during Llama-3-8B fine-tuning on Math10K.
At each of 50 optimizer steps, we perform an SVD, retain the
leading $r=8$ singular directions, and compute pairwise
subspace similarities.
As shown in Figure~\ref{fig:projection_refresh}(a), similarities
are relatively higher among early steps and substantially
lower for many pairs involving later steps, indicating drift
in the leading gradient subspace along the recorded trajectory.
Figure~\ref{fig:gradient_subspace_across_layers} provides
additional results for six selected attention projections
spanning layers 0, 15, and 31 and collectively covering query,
key, and value projections, showing that subspace variation
occurs across the selected modules and network depths.
Consequently, reusing an earlier projection basis can leave
the adaptive branch operating in a subspace that no longer
adequately represents the current leading gradient directions.
Although Fira still incorporates components outside this basis,
its residual branch does not maintain separate Adam first-
and second-moment estimates for each residual entry.
Instead, it applies norm-based scaling derived from the
projected branch.
Thus, delayed refreshes can make the allocation of
low-dimensional optimizer states less aligned with the
evolving leading gradient subspace.
The variation observed within the 50-step window also indicates
that refreshing every 50 steps need not eliminate subspace
mismatch between refreshes.
These observations motivate an explicit evaluation of the
accuracy and runtime effects of changing the projection
refresh interval.

\begin{figure*}[t]
    \centering
    \captionsetup[subfigure]{
        labelformat=parens,
        labelsep=space,
        font=small
    }

    \begin{subfigure}[t]{0.49\linewidth}
        \centering
        \vspace{0pt}
        \includegraphics[
            width=\linewidth,
            height=0.82\linewidth,
            keepaspectratio
        ]{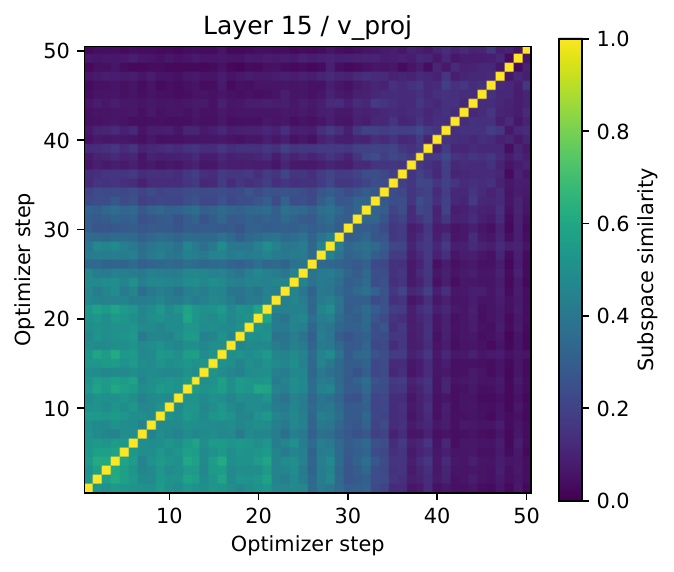}
        \caption{Gradient subspace drift}
        \label{fig:gradient_subspace_drift}
    \end{subfigure}
    \hfill
    \begin{subfigure}[t]{0.49\linewidth}
        \centering
        \vspace{0pt}
        \includegraphics[
            width=\linewidth,
            height=0.82\linewidth,
            keepaspectratio
        ]{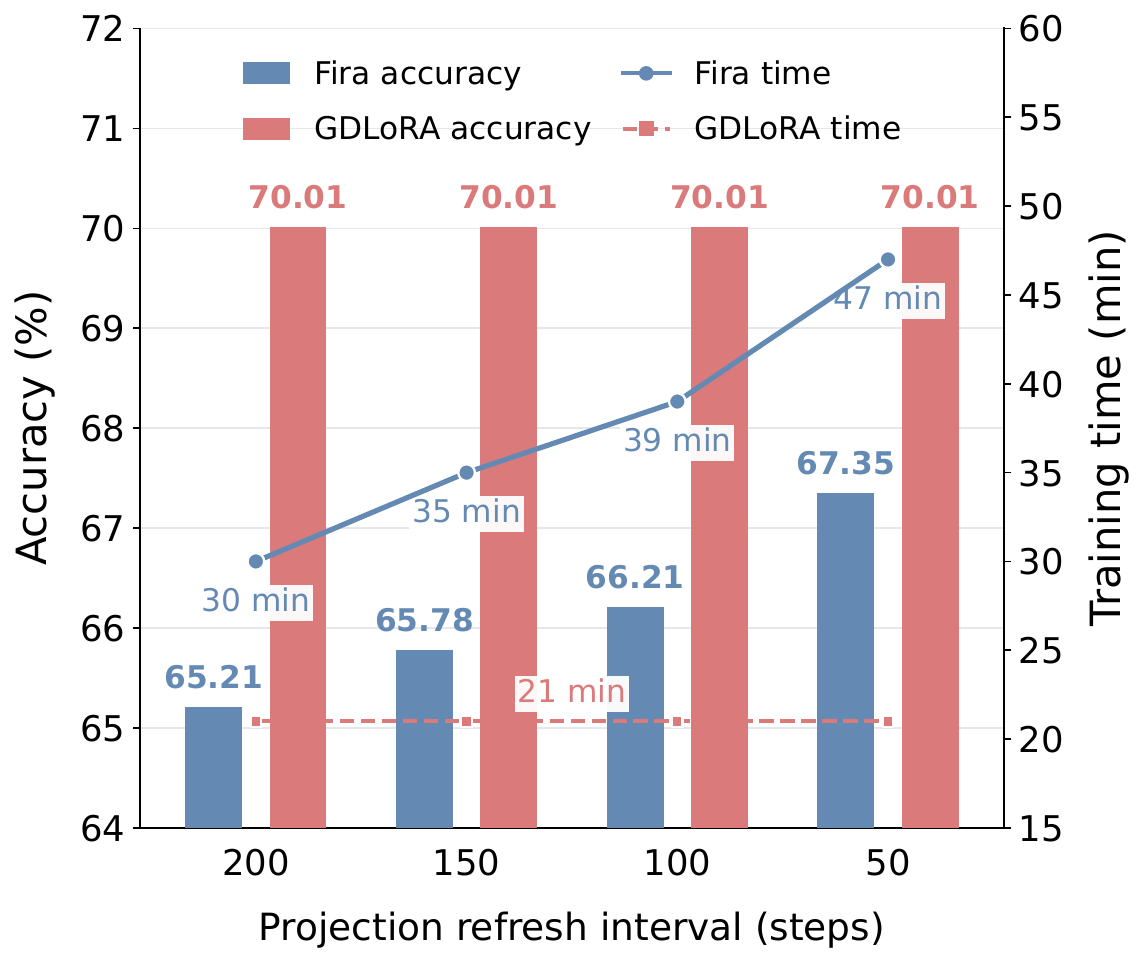}
        \caption{Accuracy and training time}
        \label{fig:fira_refresh_cost}
    \end{subfigure}

    \caption{
        \textbf{Gradient subspace drift and projection refresh costs.}
        \textbf{(a)} Pairwise similarity between the leading
        rank-8 gradient subspaces of
        \texttt{model.layers.15.self\_attn.v\_proj}
        over 50 optimizer steps.
        \textbf{(b)} Accuracy and training time of native Fira
        with projection refresh intervals of 200, 150, 100,
        and 50 steps on Math10K with Llama-3-8B.
        GDLoRA is shown as a fixed reference; its repeated
        values correspond to the same configuration.
    }
    \label{fig:projection_refresh}
\end{figure*}

\paragraph{More frequent refreshes incur substantial training costs.}
Figure~\ref{fig:projection_refresh}(b) evaluates native Fira
with refresh intervals $T\in\{200,150,100,50\}$.
Its corresponding accuracies are 65.21\%, 65.78\%, 66.21\%,
and 67.35\%, with training times of 30, 35, 39, and 47 minutes,
respectively.
Reducing the interval from 200 to 50 steps improves accuracy
by 2.14 percentage points, but increases training time by
56.7\%.
Thus, more frequent refreshes improve performance in this
experiment while imposing a substantial runtime penalty.

GDLoRA achieves 70.01\% accuracy in 21 minutes.
Even at $T=50$, the highest-accuracy Fira configuration in this
sweep remains 2.66 percentage points below GDLoRA and requires
$2.24\times$ its training time.
More frequent refreshes therefore do not close the observed
performance gap, despite their increased computational cost.
Such a runtime penalty can substantially increase the compute
requirements of longer training workloads and may limit
practicality under constrained training budgets.
The repeated GDLoRA values are a fixed reference, rather than
separate experiments with different refresh intervals.

These observations motivate placing adaptive optimization in the LoRA
factor coordinates rather than in a periodically refreshed gradient-SVD
subspace. GDLoRA maintains AdamW moments for the trainable factors $A$
and $B$, while deriving the normal component from the current adapter
space at each step. This design avoids repeated full-gradient SVD and
supplements adapter optimization with a direct normal-gradient update.

\paragraph{GDLoRA recomputes the normal component from current factors.}
GDLoRA avoids periodically estimating a gradient-SVD subspace
by constructing its projection bases directly from the current
LoRA factors:
\begin{equation}
U_t=\operatorname{orth}(B_t),
\qquad
V_t=\operatorname{orth}(A_t^{T}),
\label{eq:refresh_gdlora_bases}
\end{equation}
where $\operatorname{orth}(\cdot)$ returns an orthonormal basis
for the corresponding column space in the exact formulation.
Appendix~\ref{app:numerical_bases} specifies the numerical
thresholding procedure and its limitations for rank-deficient factors.
The normal gradient is then
\begin{equation}
G_{\perp,t}
=
(I-U_tU_t^{T})G_t(I-V_tV_t^{T}).
\label{eq:refresh_gdlora_normal}
\end{equation}
Both the gradient and projection bases are computed from the
same pre-update parameters.
With exact column-space bases, each normal update is orthogonal to the
first-order weight-space directions accessible through the
current adapters.

Basis construction operates on thin factors rather than
the full gradient matrix.
For full-rank factors, thin QR factorizations require
$O((d_o+d_i)r^2)$ operations.
Meanwhile, the adapter optimizer states remain in the
parameter coordinates of $A$ and $B$ and follow standard
AdamW updates.
Updating the projection bases therefore does not require
re-expressing the adapter moments in a new SVD basis.
The normal branch directly updates the base weights without
maintaining additional momentum or variance buffers for them.

GDLoRA thus recomputes its projection bases from the current
LoRA factors at every step without repeatedly computing an SVD
of the full weight gradient.
The cross-layer diagnostics demonstrate variation in leading
gradient subspaces across multiple attention projections,
while the refresh-interval experiment establishes the
accuracy--runtime trade-off in the evaluated Fira configuration.

\begin{figure*}[t]
    \centering

    \includegraphics[
        viewport=0 0 1082.160034 307.354401,
        clip,
        width=\linewidth
    ]{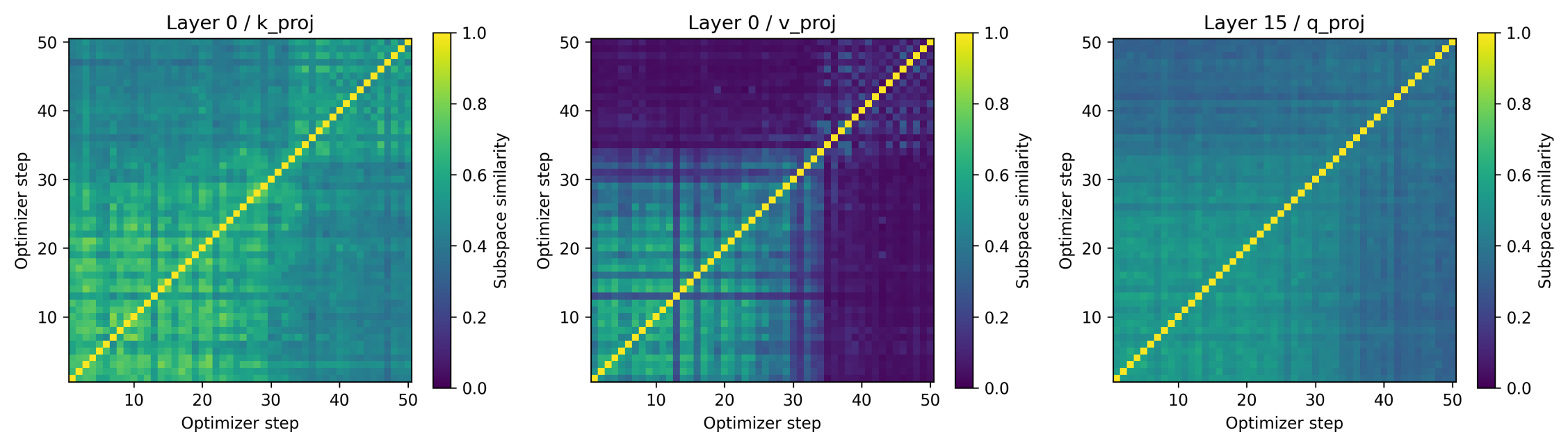}

    \par\vspace{0.6em}

    \includegraphics[
        viewport=1082.160034 0 2164.320068 307.354401,
        clip,
        width=\linewidth
    ]{selected_six_modules_row.pdf}

    \caption{
        \textbf{Gradient subspace variation across network depths
        and attention projections.}
        Each heatmap shows pairwise similarity between leading
        rank-8 gradient subspaces over 50 optimizer steps during
        Llama-3-8B fine-tuning on Math10K.
        From left to right, the top row shows the layer-0 key,
        layer-0 value, and layer-15 query projections;
        the bottom row shows the layer-15 value,
        layer-31 query, and layer-31 value projections.
        Layer numbers follow the model's module indices.
        The layer-15 value projection is also shown in
        Figure~\ref{fig:projection_refresh}(a).
        All panels use the same color scale.
    }
    \label{fig:gradient_subspace_across_layers}
\end{figure*}

\section{Computational Comparison with Fira}
\label{app:gdlora_computational_cost}

We compare the leading FLOPs of GDLoRA and native
Fira~\citep{Chen2024FiraCW} for a linear layer with weight
dimensions $d_o\times d_i$, rank $r$, and $N$ input positions.
A multiplication and an addition count as two FLOPs.
Shared base-layer forward and input-gradient computations,
as well as lower-order elementwise operations, are excluded.

Both methods compute the full weight gradient, costing
$2Nd_od_i$ FLOPs.
Their main difference in basis construction is the matrix
being decomposed.
GDLoRA orthonormalizes the thin factors $B$ and $A^T$.
Reduced Householder QR, including explicit basis formation,
has leading cost
\begin{equation}
C_{\mathrm{QR}}\approx4(d_o+d_i)r^2.
\end{equation}
Fira instead obtains its basis through SVD of the full
gradient, with dense-decomposition cost
\begin{equation}
C_{\mathrm{SVD}}
=O\bigl(d_od_i\min(d_o,d_i)\bigr).
\end{equation}
Refreshing every $T$ steps amortizes this cost to
$C_{\mathrm{SVD}}/T$ per step.

\begin{table}[t]
    \centering
    \caption{
        Leading FLOPs per layer and optimization step.
        Fira's SVD cost is amortized over its refresh
        interval $T$.
        GDLoRA recomputes both bases every step.
        Projection counts assume rank-$r$ bases and
        efficient matrix-product ordering.
    }
    \label{tab:gdlora_fira_flops}
    \small
    \setlength{\tabcolsep}{6pt}
    \renewcommand{\arraystretch}{1.15}
    \resizebox{0.85\linewidth}{!}{%
    \begin{tabular}{lcc}
        \toprule
        Operation & Fira & GDLoRA \\
        \midrule
        Full-weight gradient
        & $2Nd_od_i$
        & $2Nd_od_i$ \\
        Gradient projection and reconstruction
        & $6d_od_ir$
        & $8d_od_ir$ \\
        Basis construction
        & $C_{\mathrm{SVD}}/T$
        & $C_{\mathrm{QR}}$ \\
        Adapter forward and backward
        & $0$
        & $6Nr(d_o+d_i)$ \\
        \bottomrule
    \end{tabular}%
    }
\end{table}

GDLoRA evaluates its two-sided projection through thin-basis
multiplications without forming dense projection matrices.
Although this requires more projection FLOPs than Fira's
one-sided formulation, its basis construction is inexpensive:
for square layers of width $d$, thin-factor QR scales as
$O(dr^2)$, compared with $O(d^3)$ for a dense gradient SVD
per refresh.
For $r\ll d$, this makes recomputing the adapter-derived
bases at every step practical.
The corresponding end-to-end training-time comparison is
reported in Figure~\ref{fig:projection_refresh}(b) and
Table~\ref{tab:roberta_base_training_time}.

\begin{table}[!htbp]
    \centering
    \caption{Training time (hours) on six GLUE tasks with RoBERTa-Base.}
    \label{tab:roberta_base_training_time}
    \small
    \setlength{\tabcolsep}{6pt}
    \renewcommand{\arraystretch}{1.1}
    \begin{tabular}{lrrrrrr}
        \toprule
        Method & SST-2 & MRPC & CoLA & QNLI & RTE & STS-B \\
        \midrule
        FFT          & 10.16 & 1.64 & 5.78 & 19.56 & 2.21 & 3.64 \\
        LoRA         & 6.92 & 0.58 & 3.54 & 13.24 & 1.06 & 2.50 \\
        PiSSA        & 13.06 & 1.08 & 4.92 & 16.78 & 1.47 & 2.98 \\
        DoRA         & 9.84 & 0.65 & 4.03 & 13.78 & 1.19 & 2.52 \\
        rsLoRA       & 9.98 & 0.57 & 3.51 & 14.23 & 1.03 & 2.50 \\
        Fira         & 16.18 & 1.53 & 6.31 & 19.65 & 2.34 & 4.35 \\
        LoRA-GA      & 6.62 & 0.57 & 3.51 & 14.26 & 1.03 & 2.51 \\
        LoRA-Pro     & 7.07 & 0.61 & 5.62 & 15.77 & 1.11 & 4.06 \\
        \midrule
        GDLoRA       & 7.22 & 0.62 & 3.83 & 14.45 & 1.12 & 2.75 \\
        \bottomrule
    \end{tabular}
\end{table}

\section{Proof of the LoRA-Accessible Gradient Space Characterization}
\label{app:proof_gradient_space}

\begin{proof}[Proof of Theorem~\ref{thm:lora_gradient_space}]
We characterize the parameter-induced directions and their orthogonal
complement, then relate this complement to the kernel of the equivalent
gradient operator.

For fixed $s\neq 0$, the differential of $F_s(A,B)=sBA$ is
\begin{equation}
DF_s(A,B)[H_A,H_B]=s(BH_A+H_BA).
\end{equation}
Since a nonzero scalar does not change a linear subspace,
\begin{equation}
\begin{aligned}
\mathcal{T}_{\mathrm{LoRA}}
&=\{BH_A+H_BA\}\\
&=\left\{UK+LV^T
\;\middle|\;
K\in\mathbb{R}^{r_B\times d_i},\,
L\in\mathbb{R}^{d_o\times r_A}\right\}.
\end{aligned}
\end{equation}
Here $H_A\in\mathbb{R}^{r\times d_i}$ and
$H_B\in\mathbb{R}^{d_o\times r}$ vary freely.
For the second equality, write $B=UR_B$ and $A^T=VR_A$, where
$R_B$ and $R_A$ have full row rank. Every parameter-induced direction
then has the stated form. Conversely, the maps
$H_A\mapsto R_BH_A$ and $H_B\mapsto H_BR_A^T$ are surjective,
so every pair $(K,L)$ can be attained. When either factor has rank zero,
its basis is empty and the corresponding term vanishes.

For any $Z\in\mathbb{R}^{d_o\times d_i}$,
\begin{equation}
\langle Z,UK+LV^T\rangle_F
=\langle U^TZ,K\rangle_F+\langle ZV,L\rangle_F.
\end{equation}
Because $K$ and $L$ vary independently, this gives
\begin{equation}
\mathcal{T}_{\mathrm{LoRA}}^\perp
=\{Z\in\mathbb{R}^{d_o\times d_i}:U^TZ=0,\ ZV=0\}.
\label{eq:gdlora_appendix_normal_space}
\end{equation}

Next, consider $\Phi_s(H)=s^2(BB^TH+HA^TA)$. The symmetry of
$BB^T$ and $A^TA$ implies
\begin{equation}
\langle Z,\Phi_s(H)\rangle_F
=\langle\Phi_s(Z),H\rangle_F,
\end{equation}
so $\Phi_s$ is self-adjoint. Moreover,
\begin{equation}
\langle H,\Phi_s(H)\rangle_F
=s^2\left(\|B^TH\|_F^2+\|HA^T\|_F^2\right).
\end{equation}
Thus $\Phi_s(H)=0$ implies $B^TH=0$ and $HA^T=0$;
the converse follows directly from the definition of $\Phi_s$.
Using the column spaces spanned by $U$ and $V$, we obtain
\begin{equation}
\begin{aligned}
\ker(\Phi_s)
&=\{H:B^TH=0,\ HA^T=0\}\\
&=\{H:U^TH=0,\ HV=0\}
=\mathcal{T}_{\mathrm{LoRA}}^\perp.
\end{aligned}
\end{equation}

Finally, the range of a self-adjoint operator on a finite-dimensional
inner-product space is the orthogonal complement of its kernel. Hence
\begin{equation}
\begin{aligned}
\mathcal{G}_{\mathrm{LoRA}}
&=\operatorname{Range}(\Phi_s)=\ker(\Phi_s)^\perp\\
&=(\mathcal{T}_{\mathrm{LoRA}}^\perp)^\perp
=\mathcal{T}_{\mathrm{LoRA}}.
\end{aligned}
\end{equation}
Together with the basis representation above, this proves the theorem.
\end{proof}

\section{Orthogonal Projection onto the Normal Space}
\label{app:normal_projection}

Let $P_U=UU^{T}$ and $P_V=VV^{T}$. For the tangent space
$\mathcal{T}_{\mathrm{LoRA}}=\{UK+LV^{T}\}$,
the identity
$\langle Z,UK+LV^{T}\rangle_F
=\langle U^{T}Z,K\rangle_F+\langle ZV,L\rangle_F$
implies
\begin{equation}
\mathcal{T}_{\mathrm{LoRA}}^{\perp}
=\{Z\in\mathbb{R}^{d_o\times d_i}:U^{T}Z=0,\ ZV=0\}.
\label{eq:lora_normal_space}
\end{equation}
Define $G_{\perp}=(I-P_U)G(I-P_V)$. Since
$U^{T}(I-P_U)=0$ and $(I-P_V)V=0$, we have
$G_{\perp}\in\mathcal{T}_{\mathrm{LoRA}}^{\perp}$.
The residual satisfies
\begin{align}
G_{\parallel}=G-G_{\perp}
&=P_U G+GP_V-P_U GP_V\nonumber\\
&=U(U^{T}G)+\bigl((I-P_U)GV\bigr)V^{T}
\in\mathcal{T}_{\mathrm{LoRA}}.
\end{align}
Thus $G=G_{\parallel}+G_{\perp}$ is the unique orthogonal decomposition
into the tangent and normal spaces, establishing that
$G_{\perp}$ is the orthogonal projection of $G$ onto
$\mathcal{T}_{\mathrm{LoRA}}^{\perp}$.
By Theorem~\ref{thm:lora_gradient_space},
$\mathcal{G}_{\mathrm{LoRA}}=\mathcal{T}_{\mathrm{LoRA}}$;
hence $\langle G_{\perp},Z\rangle_F=0$ for every
$Z\in\mathcal{G}_{\mathrm{LoRA}}$.

\section{Finite-Step Analysis of the Two Update Branches}
\label{app:gdlora_local_analysis}

\newtheorem{finitetheorem}{Theorem}[section]
\newtheorem{finitecorollary}[finitetheorem]{Corollary}
\newtheorem{finiteproposition}[finitetheorem]{Proposition}

Section~\ref{sec:gradient_decomposition} extracts the normal gradient
relative to the current LoRA-accessible space. This appendix examines
whether finite adapter updates can themselves supply the missing normal
displacement. Although bilinear factor updates can produce such a
displacement, its reachable set is constrained by the factor-step size
and internal rank. We characterize these constraints and give a
sufficient condition under which direct normal compensation improves
matching to a full-gradient reference step. All statements use the
current, fixed space and exact column-space bases; the space may change
at subsequent iterations.

\subsection{Normal displacement of a finite adapter update}
\label{app:finite_normal_displacement}

Let
\begin{equation}
\Delta A_t=A_{t+1}-A_t,
\qquad
\Delta B_t=B_{t+1}-B_t
\end{equation}
denote the factor increments produced by the adapter optimizer.
The corresponding finite-step change in the adapter weight is
\begin{equation}
\begin{aligned}
D_t:=\Delta W_{\mathrm{adapter},t}
&=s(B_{t+1}A_{t+1}-B_tA_t)\\
&=s(B_t\Delta A_t+\Delta B_tA_t)
+s\Delta B_t\Delta A_t.
\end{aligned}
\label{eq:finite_adapter_displacement}
\end{equation}

The first-order term belongs to the tangent space induced by the
current factors:
\begin{equation}
D_{\mathrm{tan},t}
=s(B_t\Delta A_t+\Delta B_tA_t)
\in\mathcal{T}_{\mathrm{LoRA},t}.
\end{equation}
Since the normal gradient is computed using the same pre-update
factors, it satisfies
\begin{equation}
\langle G_{\perp,t},D_{\mathrm{tan},t}\rangle_F=0.
\end{equation}
This property holds for arbitrary factor increments, including
those generated by AdamW.

Let $P_{\perp,t}(Z)=(I-U_tU_t^T)Z(I-V_tV_t^T)$.
The actual normal displacement of the adapter update is therefore
\begin{equation}
Q_t:=P_{\perp,t}(D_t)
=s(I-U_tU_t^T)\Delta B_t\Delta A_t(I-V_tV_t^T).
\label{eq:finite_adapter_normal_displacement}
\end{equation}
Thus, a finite LoRA update can have a normal component, but that
component arises entirely from its bilinear remainder. In particular,
\begin{equation}
\langle G_{\perp,t},D_t\rangle_F
=\langle G_{\perp,t},Q_t\rangle_F
=s\langle G_{\perp,t},\Delta B_t\Delta A_t\rangle_F
\end{equation}
need not be zero. Including the base-weight update, the
total effective-weight displacement is
\begin{equation}
\begin{aligned}
W'_{t+1}-W'_t
={}&-\eta_{n,t}G_{\perp,t}
+D_{\mathrm{tan},t}\\
&+s\Delta B_t\Delta A_t.
\end{aligned}
\end{equation}

\subsection{Reachable normal displacements under a factor-step budget}
\label{app:finite_normal_reachability}

Fix $A\in\mathbb{R}^{r\times d_i}$,
$B\in\mathbb{R}^{d_o\times r}$, and $s\neq0$, and suppress the time
index. The factors need not have full rank. Write
$\mathcal T=\mathcal T_{\mathrm{LoRA}}$ and
$P_\perp(Z)=(I-UU^T)Z(I-VV^T)$, using the zero-rank convention
of Section~\ref{sec:gradient_decomposition}. For $\varepsilon\geq0$,
define
\begin{equation}
\mathcal R_\varepsilon
=\left\{P_\perp\!\left(s[(B+\Delta B)(A+\Delta A)-BA]\right)
\;\middle|\;
\|\Delta A\|_F^2+\|\Delta B\|_F^2\leq\varepsilon^2\right\}.
\label{eq:finite_normal_reachable_set}
\end{equation}
Here $\|\cdot\|_*$ denotes the nuclear norm, the sum of singular values.

\begin{finitetheorem}[Exact normal reachability]
\label{thm:finite_normal_reachability}
For the fixed factor coordinates above,
\begin{equation}
\boxed{\mathcal R_\varepsilon
=\left\{Z\in\mathcal T^\perp:
\operatorname{rank}(Z)\leq r,\quad
\|Z\|_*\leq\frac{|s|\varepsilon^2}{2}\right\}.}
\label{eq:finite_normal_reachability}
\end{equation}
\end{finitetheorem}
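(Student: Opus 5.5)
The plan is to reduce the reachable set to the image of the bilinear remainder alone, and then prove the two inclusions separately. I will use the description of the normal space from Appendix~\ref{app:normal_projection}, namely $\mathcal T^\perp=\{Z:U^TZ=0,\ ZV=0\}$, and the fact that $P_\perp$ is the orthogonal projector onto it.

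\emph{Step 1 (reduction).} Expand
\[
s[(B+\Delta B)(A+\Delta A)-BA]=s(B\Delta A+\Delta B A)+s\,\Delta B\Delta A .
\]
The first term lies in $\mathcal T$ by the definition of $\mathcal T_{\mathrm{LoRA}}$, so $P_\perp$ annihilates it. One can also see this directly: $(I-UU^T)B=0$ and $A(I-VV^T)=0$. Hence
\[
\mathcal R_\varepsilon=\bigl\{s(I-UU^T)\Delta B\Delta A(I-VV^T):\ \|\Delta A\|_F^2+\|\Delta B\|_F^2\le\varepsilon^2\bigr\}.
\]
Rank-deficient or zero-rank factors need no separate treatment, because only the projectors $UU^T$ and $VV^T$ enter.

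\emph{Step 2 (inclusion $\subseteq$).} Take an element $Z=sP_\perp(\Delta B\Delta A)$.
\begin{itemize}
\item $Z\in\mathcal T^\perp$, since it is in the range of $P_\perp$.
\item $\operatorname{rank}(Z)\le\operatorname{rank}(\Delta B\Delta A)\le r$, because $\Delta B\Delta A$ factors through $\mathbb R^r$.
\item For the nuclear norm, first use that multiplying by the contractions $I-UU^T$ and $I-VV^T$ does not increase $\|\cdot\|_*$; this follows from the variational formula $\|M\|_*=\max_{\|C\|_2\le1}\langle C,M\rangle_F$. Then apply the Hölder-type bound $\|\Delta B\Delta A\|_*\le\|\Delta B\|_F\|\Delta A\|_F$, which is Cauchy--Schwarz on the dual pairing. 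Finally AM--GM gives
\[
\|Z\|_*\le |s|\,\|\Delta B\|_F\|\Delta A\|_F\le \frac{|s|}{2}\bigl(\|\Delta A\|_F^2+\|\Delta B\|_F^2\bigr)\le \frac{|s|\varepsilon^2}{2}.
\]
\end{itemize}

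\emph{Step 3 (inclusion $\supseteq$, explicit construction).} Let $Z\in\mathcal T^\perp$ with $\operatorname{rank}(Z)=k\le r$ and $\|Z\|_*\le|s|\varepsilon^2/2$, and take a compact SVD $Z=X\Sigma Y^T$ with $X\in\mathbb R^{d_o\times k}$ and $Y\in\mathbb R^{d_i\times k}$.
\begin{itemize}
\item Since $U^TZ=0$, we get $U^TX=0$. Likewise $ZV=0$ gives $Y^TV=0$.
\item Define $\Delta B=[X(\Sigma/|s|)^{1/2},\,0]\in\mathbb R^{d_o\times r}$ and $\Delta A=\operatorname{sign}(s)\,[(\Sigma/|s|)^{1/2}Y^T;\,0]\in\mathbb R^{r\times d_i}$, padding with $r-k$ zero columns and rows respectively.
\item Then $s\Delta B\Delta A=|s|X(\Sigma/|s|)Y^T=Z$.
\item Because $X\perp\operatorname{col}(U)$ and $Y\perp\operatorname{col}(V)$, the projector $P_\perp$ fixes this product.
\item The budget is $\|\Delta B\|_F^2+\|\Delta A\|_F^2=2\operatorname{tr}(\Sigma)/|s|=2\|Z\|_*/|s|\le\varepsilon^2$.
\end{itemize}
The case $Z=0$ is handled by $\Delta A=0$, $\Delta B=0$.

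\emph{Expected main obstacle.} The crux is the sharp nuclear-norm bound in Step 2: showing the constant $1/2$ cannot be improved, and that the projections do not spoil the bound. I will handle this with the dual-norm argument above. The construction in Step 3 then shows the bound is attained through balanced factors, which is exactly the equality case of AM--GM.
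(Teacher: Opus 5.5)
Your proposal is correct and follows essentially the same route as the paper. Both reduce the reachable set to the bilinear remainder $s(I-UU^T)\Delta B\Delta A(I-VV^T)$. Both get one inclusion from the nuclear-norm product bound plus AM--GM, and the other from balanced SVD-based factors $\Delta B\propto X\Sigma^{1/2}$, $\Delta A\propto \operatorname{sgn}(s)\Sigma^{1/2}Y^T$, zero-padded to $r$. The only cosmetic difference is in the nuclear-norm step: you pass the projectors through the nuclear norm via its dual characterization, whereas the paper projects the factors first and uses Frobenius nonexpansiveness; both are equally valid.
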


\begin{proof}
Let $\widetilde B=(I-UU^T)\Delta B$ and
$\widetilde A=\Delta A(I-VV^T)$.
Equation~\eqref{eq:finite_adapter_normal_displacement} gives
$Q=s\widetilde B\widetilde A\in\mathcal T^\perp$, with
$\operatorname{rank}(Q)\leq r$. The product inequality for the nuclear
norm, nonexpansiveness of orthogonal projections, and
$2ab\leq a^2+b^2$ yield
\begin{align}
\|Q\|_*
&\leq |s|\|\widetilde B\|_F\|\widetilde A\|_F
\leq |s|\|\Delta B\|_F\|\Delta A\|_F\nonumber\\
&\leq\frac{|s|}{2}
\bigl(\|\Delta B\|_F^2+\|\Delta A\|_F^2\bigr)
\leq\frac{|s|\varepsilon^2}{2}.
\label{eq:finite_normal_nuclear_bound}
\end{align}
This proves one inclusion.

Conversely, let a nonzero $Z\in\mathcal T^\perp$ have compact SVD
$Z=L\Sigma R^T$ and rank $k\leq r$. Because
$U^TZ=0$ and $ZV=0$, its nonzero singular vectors satisfy
$U^TL=0$ and $R^TV=0$. Let
$J_k=[I_k\;0]\in\mathbb{R}^{k\times r}$ and choose
\begin{equation}
\Delta B=\frac{1}{\sqrt{|s|}}L\Sigma^{1/2}J_k,
\qquad
\Delta A=\frac{\operatorname{sgn}(s)}{\sqrt{|s|}}
J_k^T\Sigma^{1/2}R^T.
\label{eq:finite_normal_constructive_factors}
\end{equation}
These increments satisfy $s\Delta B\Delta A=Z$ and are unchanged
by the corresponding left and right normal projectors. Hence the
normal projection of their finite adapter displacement is exactly $Z$.
Moreover,
\begin{equation}
\|\Delta A\|_F^2+\|\Delta B\|_F^2
=\frac{2\operatorname{tr}(\Sigma)}{|s|}
=\frac{2\|Z\|_*}{|s|}
\leq\varepsilon^2.
\end{equation}
The zero matrix is realized by zero increments, completing the proof.
\end{proof}

The theorem describes the normal projection, not the whole weight
displacement: the construction can also produce a tangent component.
It characterizes all admissible factor increments, not the subset
selected by a particular optimizer. The budget is a Euclidean step
budget in the fixed factor coordinates, rather than a memory or
compute budget; it need not be invariant to rescaling the factors.
Within these coordinates, finite-step normal displacement has rank
at most $r$ and nuclear norm at most quadratic in $\varepsilon$.

\begin{finitecorollary}[Optimal normal matching error]
\label{cor:finite_normal_matching}
Let $Y\in\mathcal T^\perp$ be a target normal displacement, with
singular values $\sigma_1\geq\cdots\geq\sigma_p\geq0$,
where $p=\min(d_o,d_i)$. Set $k=\min(r,p)$ and
$c=|s|\varepsilon^2/2$. Define
$a_i=(\sigma_i-\tau)_+$ for $i\leq k$, where
$(x)_+=\max\{x,0\}$. Take $\tau=0$ if
$\sum_{i=1}^k\sigma_i\leq c$; otherwise take
$\tau\in[0,\sigma_1]$ satisfying
$\sum_{i=1}^k(\sigma_i-\tau)_+=c$. Then
\begin{equation}
\min_{Q\in\mathcal R_\varepsilon}\|Y-Q\|_F^2
=\sum_{i=1}^k(\sigma_i-a_i)^2
+\sum_{i=k+1}^{p}\sigma_i^2
\geq\sum_{i=k+1}^{p}\sigma_i^2.
\label{eq:finite_normal_optimal_error}
\end{equation}
\end{finitecorollary}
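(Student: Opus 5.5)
By Theorem~\ref{thm:finite_normal_reachability}, the corollary becomes a matrix approximation problem over a rank- and nuclear-norm-constrained set inside $\mathcal T^\perp$:
\[
\min\Bigl\{\|Y-Q\|_F^2:\ Q\in\mathcal T^\perp,\ \operatorname{rank}(Q)\le r,\ \|Q\|_*\le c\Bigr\}.
\]
The plan is to prove a lower bound valid for every feasible $Q$ and then exhibit a feasible $Q$ attaining it.

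For the lower bound I will use von Neumann's trace inequality. Let $q_1\ge\cdots\ge q_p\ge0$ be the singular values of $Q$. Then
\[
\|Y-Q\|_F^2=\|Y\|_F^2-2\langle Y,Q\rangle_F+\|Q\|_F^2\ \ge\ \sum_{i=1}^p(\sigma_i-q_i)^2 .
\]
The rank constraint forces $q_i=0$ for $i>k$, since $\operatorname{rank}(Q)\le\min(r,p)=k$. The tail therefore contributes exactly $\sum_{i>k}\sigma_i^2$. What remains is the vector problem: minimize $\sum_{i\le k}(\sigma_i-q_i)^2$ over $q\ge0$ with $\sum_{i\le k} q_i\le c$. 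This is Euclidean projection of $(\sigma_1,\dots,\sigma_k)$ onto the capped simplex. If $\sum_{i\le k}\sigma_i\le c$, the unconstrained minimizer $q=\sigma$ is feasible, which gives $\tau=0$. Otherwise the budget constraint is active, and the KKT conditions give $q_i=(\sigma_i-\tau)_+$ with $\tau$ fixed by $\sum_{i\le k}(\sigma_i-\tau)_+=c$. Such a $\tau\in[0,\sigma_1]$ exists and is unique by continuity and monotonicity of the map $\tau\mapsto\sum(\sigma_i-\tau)_+$, which runs from a value exceeding $c$ at $\tau=0$ down to $0$ at $\tau=\sigma_1$. This yields $a_i$ and the lower bound. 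I will also check that the solution of the relaxed problem automatically satisfies the ordering $q_1\ge\cdots\ge q_k$, since $(\sigma_i-\tau)_+$ is nonincreasing in $i$, so dropping the ordering constraint loses nothing.

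For attainment, take a compact SVD $Y=\sum_i\sigma_i \ell_i\rho_i^T$ and set $Q^\star=\sum_{i\le k}a_i \ell_i\rho_i^T$. By construction $\operatorname{rank}(Q^\star)\le k\le r$ and $\|Q^\star\|_*=\sum a_i\le c$. The key point is membership in $\mathcal T^\perp$. Because $U^TY=0$ and $YV=0$, the singular vectors with $\sigma_i>0$ satisfy $U^T\ell_i=0$ and $\rho_i^TV=0$, exactly as in the proof of Theorem~\ref{thm:finite_normal_reachability}. Any index with $\sigma_i=0$ has $a_i=0$, so it never contributes to $Q^\star$. Hence $Q^\star\in\mathcal R_\varepsilon$. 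Shared singular vectors give $\|Y-Q^\star\|_F^2=\sum_{i\le k}(\sigma_i-a_i)^2+\sum_{i>k}\sigma_i^2$, which matches the lower bound and proves the equality. The final inequality in the corollary is then immediate because the first sum is nonnegative.

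I expect the main obstacle to be stating the lower-bound step cleanly. This means invoking von Neumann's inequality with the correct ordering of singular values, handling rank $k<p$ through zero padding, and justifying the KKT / capped-simplex projection formula, including existence of $\tau$ in the stated interval. The remaining steps are routine.
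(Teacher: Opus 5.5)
Your proposal is correct and follows essentially the same route as the paper. Both arguments use Theorem~\ref{thm:finite_normal_reachability} to reduce to a rank- and nuclear-norm-constrained approximation, apply the von Neumann trace inequality to pass to the capped-simplex vector problem solved by the common threshold $\tau$, and attain the bound by aligning with the singular vectors of $Y$, which lie in the normal subspaces; your version just spells out details the paper leaves implicit, namely the existence and uniqueness of $\tau\in[0,\sigma_1]$ and the fact that $\sigma_i=0$ forces $a_i=0$.
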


\begin{proof}
By Theorem~\ref{thm:finite_normal_reachability}, feasible matrices
have rank at most $r$ and nuclear norm at most $c$. For fixed singular
values of $Q$, the trace inequality
$\langle Y,Q\rangle_F\leq\sum_i\sigma_i(Y)\sigma_i(Q)$
shows that aligning its singular vectors with those of $Y$ minimizes
the Frobenius error. This alignment is feasible in $\mathcal T^\perp$
because the nonzero singular vectors of $Y$ lie in the corresponding
normal subspaces. The remaining problem is
\begin{equation}
\min_{a_i\geq0,\,\sum_{i=1}^k a_i\leq c}
\sum_{i=1}^k(\sigma_i-a_i)^2
+\sum_{i=k+1}^p\sigma_i^2.
\end{equation}
The optimality conditions give the stated common-threshold solution.
It preserves the ordering of the singular values, and the resulting
matrix is reachable by Theorem~\ref{thm:finite_normal_reachability}.
The nonnegative first sum gives the spectral-tail lower bound.
\end{proof}

For $Y=-\eta_nG_\perp$, the spectral-tail term is
$\eta_n^2\sum_{i>r}\sigma_i(G_\perp)^2$, with an empty sum interpreted
as zero. Thus a nonzero tail cannot be matched by an adapter's normal
displacement even without a step-size constraint. The first term in
Eq.~\eqref{eq:finite_normal_optimal_error} additionally quantifies the
error imposed by the factor-step budget. These are best attainable
errors over admissible increments, not predictions of optimizer behavior.

\subsection{Effect of direct normal compensation}
\label{app:finite_normal_compensation}

Fix the same current state and the same candidate factor increments
for both updates. Let $D=s[(B+\Delta B)(A+\Delta A)-BA]$ and
$Q=P_\perp(D)$. Use the recovered full gradient $G$ with the
normalization of Section~\ref{sec:full_gradient_recovery}, and
$G_\perp=P_\perp(G)$. For $\eta_n>0$, take $-\eta_nG$ as the
reference weight step. The normal-component matching errors of the
adapter-only update $D$ and the compensated update $D-\eta_nG_\perp$
are, respectively,
\begin{align}
E_{\mathrm{LoRA}}
&=\|P_\perp(D+\eta_nG)\|_F
=\|Q+\eta_nG_\perp\|_F,\nonumber\\
E_{\mathrm{GD}}
&=\|P_\perp(D-\eta_nG_\perp+\eta_nG)\|_F
=\|Q\|_F.
\label{eq:finite_normal_compensation_errors}
\end{align}

\begin{finiteproposition}[Improvement from direct normal compensation]
\label{prop:finite_normal_compensation}
For the fixed candidate update above,
\begin{equation}
E_{\mathrm{LoRA}}^2-E_{\mathrm{GD}}^2
=\eta_n^2\|G_\perp\|_F^2
+2\eta_n\langle Q,G_\perp\rangle_F.
\label{eq:finite_normal_compensation_identity}
\end{equation}
If $\|\Delta A\|_F^2+\|\Delta B\|_F^2\leq\varepsilon^2$ and
$c=|s|\varepsilon^2/2$, then
\begin{equation}
E_{\mathrm{LoRA}}^2-E_{\mathrm{GD}}^2
\geq\eta_n\|G_\perp\|_F
\bigl(\eta_n\|G_\perp\|_F-2c\bigr).
\label{eq:finite_normal_compensation_bound}
\end{equation}
Consequently, $\eta_n\|G_\perp\|_F>2c$ guarantees
$E_{\mathrm{GD}}<E_{\mathrm{LoRA}}$.
\end{finiteproposition}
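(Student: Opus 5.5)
The argument has two parts: an exact expansion of the squared errors, then a lower bound on the cross term using the nuclear-norm estimate from Theorem~\ref{thm:finite_normal_reachability}.

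\textbf{Step 1: the identity.} By Eq.~\eqref{eq:finite_normal_compensation_errors} we have $E_{\mathrm{GD}}=\|Q\|_F$ and $E_{\mathrm{LoRA}}=\|Q+\eta_nG_\perp\|_F$. These expressions rely on linearity of $P_\perp$ and on $P_\perp(G_\perp)=G_\perp$; the latter holds because $P_\perp$ is idempotent, as shown in Appendix~\ref{app:normal_projection}. Expanding the Frobenius square gives
\[
\|Q+\eta_nG_\perp\|_F^2=\|Q\|_F^2+2\eta_n\langle Q,G_\perp\rangle_F+\eta_n^2\|G_\perp\|_F^2 .
\]
Subtracting $\|Q\|_F^2$ yields Eq.~\eqref{eq:finite_normal_compensation_identity} directly.

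\textbf{Step 2: bounding the cross term.} Under the step budget, the first inclusion in the proof of Theorem~\ref{thm:finite_normal_reachability} gives $Q\in\mathcal R_\varepsilon$, so $\|Q\|_*\le c$. The cross term $\langle Q,G_\perp\rangle_F$ may be negative, so we bound its absolute value. Apply Hölder's inequality for Schatten norms, i.e.\ the duality between the nuclear and spectral norms:
\[
|\langle Q,G_\perp\rangle_F|\le \|Q\|_*\,\|G_\perp\|_2\le \|Q\|_*\,\|G_\perp\|_F\le c\,\|G_\perp\|_F .
\]
Inserting this into the identity gives
\[
E_{\mathrm{LoRA}}^2-E_{\mathrm{GD}}^2\ge \eta_n^2\|G_\perp\|_F^2-2\eta_n c\|G_\perp\|_F ,
\]
which factors into Eq.~\eqref{eq:finite_normal_compensation_bound}.

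\textbf{Step 3: the consequence.} If $\eta_n\|G_\perp\|_F>2c$, then $\|G_\perp\|_F>0$ since $c\ge0$. Both factors on the right of the bound are then strictly positive, so $E_{\mathrm{LoRA}}^2>E_{\mathrm{GD}}^2$. Because both errors are nonnegative, this gives $E_{\mathrm{GD}}<E_{\mathrm{LoRA}}$.

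The only step needing care is the cross-term bound. One must use the nuclear-norm control of $Q$ (rather than a Frobenius bound on $Q$) to get the constant $c$. It also requires $\|\cdot\|_2\le\|\cdot\|_F$ applied to $G_\perp$, since the stated bound is written in terms of $\|G_\perp\|_F$.
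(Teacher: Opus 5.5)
Your proof is correct and follows the paper's argument: expand the squared norms using idempotence of $P_\perp$, then lower-bound the cross term with the nuclear-norm estimate $\|Q\|_*\le c$ from Theorem~\ref{thm:finite_normal_reachability}. The only difference is that the paper uses $\|Q\|_F\le\|Q\|_*\le c$ followed by plain Cauchy--Schwarz rather than nuclear/spectral H\"older, so your closing remark is slightly off: a Frobenius bound on $Q$ does suffice, as long as it is derived from the nuclear bound.
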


\begin{proof}
Expanding the squared norms in
Eq.~\eqref{eq:finite_normal_compensation_errors} gives the identity.
Theorem~\ref{thm:finite_normal_reachability} implies
$\|Q\|_F\leq\|Q\|_*\leq c$.
Cauchy--Schwarz then gives
$\langle Q,G_\perp\rangle_F\geq-c\|G_\perp\|_F$,
which proves the bound and the strict-improvement condition.
\end{proof}

The condition is sufficient, not necessary: the exact effect also
depends on the alignment of $Q$ with $G_\perp$. It identifies a regime
where the desired normal step exceeds what the bounded bilinear
remainder can compensate, so the direct branch improves normal
matching for every candidate factor update within that budget.
The reference is a current full-gradient step under the paper's
normalization, not an FFT AdamW update. This comparison holds at the
same state with identical candidate factor increments; it does not
order the losses or complete training trajectories of the two methods.
Together, the results explain both the finite-step normal displacement
already available to LoRA and the additional role of direct normal
compensation, while retaining the first-order interpretation in the
main text.

\begin{table*}[t]
    \centering
    \caption{
        Hyperparameters used for GDLoRA in the natural language
        understanding experiments on the GLUE benchmark.
    }
    \label{tab:glue_hyperparameters}
    \small
    \setlength{\tabcolsep}{5pt}
    \renewcommand{\arraystretch}{1.15}
    \resizebox{1.0\linewidth}{!}{%
    \begin{tabular}{clcccccc}
        \toprule
        Model & Hyperparameter
        & SST-2 & MRPC & CoLA & QNLI & RTE & STS-B \\
        \midrule

        \multicolumn{2}{l}{Optimizer}
        & \multicolumn{6}{c}{AdamW} \\
        \multicolumn{2}{l}{Warmup ratio}
        & \multicolumn{6}{c}{0.06} \\
        \multicolumn{2}{l}{Learning-rate schedule}
        & \multicolumn{6}{c}{Linear} \\

        \midrule

        \multirow{7}{*}{\rotatebox[origin=c]{90}{RoBERTa-Base}}
        & \# GPUs
        & \multicolumn{6}{c}{1} \\

        & Epochs
        & 60 & 30 & 80 & 25 & 160 & 80 \\

        & Learning rate (LoRA)
        & \multicolumn{6}{c}{1e-4} \\

        & Normal-gradient learning rate
        & \multicolumn{6}{c}{1} \\

        & LoRA rank
        & \multicolumn{6}{c}{8} \\

        & Maximum sequence length
        & \multicolumn{6}{c}{512} \\

        & Batch size per GPU
        & \multicolumn{6}{c}{32} \\

        \midrule

        \multirow{7}{*}{\rotatebox[origin=c]{90}{RoBERTa-Large}}
        & \# GPUs
        & \multicolumn{6}{c}{1} \\

        & Epochs
        & 20 & 40 & 40 & 20 & 40 & 40 \\

        & Learning rate (LoRA)
        & \multicolumn{6}{c}{1e-4} \\

        & Normal-gradient learning rate
        & \multicolumn{6}{c}{1} \\

        & LoRA rank
        & \multicolumn{6}{c}{8} \\

        & Maximum sequence length
        & \multicolumn{6}{c}{128} \\

        & Batch size per GPU
        & \multicolumn{6}{c}{32} \\

        \bottomrule
    \end{tabular}%
    }
\end{table*}

\section{More Experimental Results}
\label{app:more_experimental_results}

\subsection{Comparison with Additional PEFT Methods}
\label{app:additional_peft_comparison}

Table~\ref{tab:additional_peft_math} compares GDLoRA with standard LoRA
($r=8$), the established LoRA variant AdaLoRA~\citep{zhang2023adalora},
the additional PEFT methods IA$^3$\citep{Liu2022FewShotPF} and FourierFT\citep{Gao2024ParameterEfficientFW}, and the recent LoRA
variant MiSS\citep{Kang2024MiSSRT}, using Llama-3-8B on six mathematical reasoning tasks. GDLoRA achieves the highest average accuracy of 70.01\%,
exceeding LoRA by 2.50 percentage points and MiSS by 2.43 points.
It obtains the best scores on MultiArith, SingleEq, SVAMP, and GSM8K,
while IA$^3$ and FourierFT lead on AddSub and AQuA, respectively.

\begin{table}[!htbp]
    \centering
    \caption{Additional comparison on six mathematical reasoning tasks with
    Llama-3-8B. Scores are accuracy (\%); Avg. is the arithmetic mean across
    the six tasks.
    The best mean score in each column is shown in \textbf{bold}.}
    \label{tab:additional_peft_math}
    \small
    \setlength{\tabcolsep}{4pt}
    \renewcommand{\arraystretch}{1.1}
    \resizebox{\linewidth}{!}{%
    \begin{tabular}{lrrrrrrr}
        \toprule
        Method & AddSub & MultiArith & SingleEq & SVAMP & GSM8K & AQuA & Avg. \\
        \midrule
        LoRA ($r=8$)
        & 81.52 & 87.28
        & 91.54 & 66.93
        & 55.34 & 22.44 & 67.51 \\
        AdaLoRA & 80.57 & 84.22 & 89.47 & 64.80 & 52.12 & 25.44 & 66.10 \\
        IA$^3$ & \textbf{86.84} & 83.50 & 86.42 & 64.50 & 50.64 & 26.38 & 66.38 \\
        FourierFT & 85.06 & 88.67 & 88.98 & 64.30 & 45.40 & \textbf{27.95} & 66.73 \\
        MiSS & 84.67 & 86.93 & 88.71 & 64.62 & 55.21 & 25.35 & 67.58 \\
        \midrule
        \rowcolor{gdloraBlue}
        GDLoRA
        & 84.81 & \textbf{90.69}
        & \textbf{92.52} & \textbf{69.61}
        & \textbf{56.03} & 26.38 & \textbf{70.01} \\
        \bottomrule
    \end{tabular}%
    }
\end{table}

\subsection{Effect of Adapter Rank}
\label{app:rank_scaling}

Table~\ref{tab:gdlora_rank_scaling} compares LoRA and GDLoRA at ranks
$r\in\{8,16,32\}$ on Llama-3-8B. GDLoRA achieves average accuracies of
70.01\%, 71.96\%, and 72.81\%, exceeding LoRA at the same ranks by
2.50, 2.29, and 2.01 percentage points, respectively.
GDLoRA at rank 8 also outperforms LoRA at rank 16, while GDLoRA at
rank 16 outperforms LoRA at rank 32. Rank-32 GDLoRA achieves the highest
scores on five tasks; AQuA peaks at rank 16. These results show that
GDLoRA's advantage persists across the evaluated ranks, although
individual task scores do not always improve monotonically.

\begin{table}[!htbp]
    \centering
    \caption{LoRA and GDLoRA across adapter ranks on six mathematical
    reasoning tasks with Llama-3-8B. Scores are accuracy (\%); Avg. is the
    arithmetic mean across tasks. GDLoRA rows are shaded blue, and the
    best score in each column is shown in \textbf{bold}.}
    \label{tab:gdlora_rank_scaling}
    \small
    \setlength{\tabcolsep}{4pt}
    \renewcommand{\arraystretch}{1.1}
    \resizebox{\linewidth}{!}{%
    \begin{tabular}{lrrrrrrr}
        \toprule
        Method & AddSub & MultiArith & SingleEq & SVAMP & GSM8K & AQuA & Avg. \\
        \midrule
        LoRA ($r=8$) & 81.52 & 87.28 & 91.54 & 66.93 & 55.34 & 22.44 & 67.51 \\
        LoRA ($r=16$) & 83.56 & 91.34 & 92.43 & 68.81 & 56.97 & 24.89 & 69.67 \\
        LoRA ($r=32$) & 85.14 & 91.87 & 93.45 & 69.72 & 57.66 & 26.95 & 70.80 \\
        \midrule
        \rowcolor{gdloraBlue}
        GDLoRA ($r=8$) & 84.81 & 90.69 & 92.52 & 69.61 & 56.03 & 26.38 & 70.01 \\
        \rowcolor{gdloraBlue}
        GDLoRA ($r=16$) & 88.15 & 92.16 & 93.78 & 70.85 & 57.94 & \textbf{28.89} & 71.96 \\
        \rowcolor{gdloraBlue}
        GDLoRA ($r=32$) & \textbf{90.36} & \textbf{94.48} & \textbf{94.23} & \textbf{71.35} & \textbf{58.47} & 27.94 & \textbf{72.81} \\
        \bottomrule
    \end{tabular}%
    }
\end{table}

\section{Experimental Details}
\label{app:experimental_details}

\paragraph{Natural Language Understanding.}
For natural language understanding, we use RoBERTa-Base and
RoBERTa-Large~\citep{Liu2019RoBERTaAR} on six tasks from the GLUE
benchmark~\citep{Wang2018GLUEAM}: SST-2, MRPC, CoLA, QNLI, RTE, and STS-B.
We apply rank-8 LoRA adapters to the query and value projections
in each Transformer layer.

\paragraph{Mathematical and Commonsense Reasoning.}
For mathematical reasoning, we fine-tune Llama-3-8B~\citep{Dubey2024TheL3} on Math10K~\citep{Hu2023LLMAdaptersAA} and
evaluate it on six tasks: AddSub, MultiArith, SingleEq, SVAMP, GSM8K,
and AQuA.
For commonsense reasoning, we fine-tune Gemma-7B~\citep{Mesnard2024GemmaOM} on Commonsense170K~\citep{Hu2023LLMAdaptersAA} and
evaluate it on eight tasks: OpenBookQA, ARC-Challenge, WinoGrande,
PIQA, Social IQa, ARC-Easy, BoolQ, and HellaSwag.
The two reasoning training collections contain approximately 10K and
170K examples, respectively.
In both reasoning settings, we use rank-8 adapters for the query, key,
and value projections.

\paragraph{Image Classification.}
For image classification, we evaluate ViT-Base and
ViT-Large~\citep{Dosovitskiy2020AnII} on eight datasets: OxfordPets~\citep{parkhi2012cats},
StanfordCars~\citep{krause20133d}, CIFAR-10~\citep{krizhevsky2009learning}, DTD~\citep{cimpoi2014describing}, EuroSAT~\citep{helber2019eurosat}, FGVC-Aircraft~\citep{maji2013fine}, RESISC45~\citep{cheng2017remote}, and CIFAR-100~\citep{krizhevsky2009learning}.

\paragraph{Baseline Methods.}
We compare GDLoRA with full fine-tuning (FFT), standard
LoRA~\citep{hu2021lora}, Fira~\citep{Chen2024FiraCW}, and five LoRA variants:
PiSSA~\citep{meng2024pissa}, DoRA~\citep{liu2024dora},
rsLoRA~\citep{kalajdzievski2023rank},
LoRA-GA~\citep{wang2024lora}, and
LoRA-Pro~\citep{wang2025lora}.
FFT updates all model parameters during fine-tuning.
The LoRA variants seek to narrow the performance gap between LoRA
and FFT through improved initialization, weight parameterization,
scaling, or optimization.
During fine-tuning, their updates remain governed by their
respective adapter parameterizations.
Fira instead combines low-rank gradient projection with scaled residual
updates to enable full-rank weight adaptation.
GDLoRA supplements standard LoRA with normal-gradient updates
applied directly to the base weights, extending adaptation beyond
the first-order weight-space directions accessible through the
current LoRA parameterization.

\begin{table}[t]
    \centering
    \caption{
        Training and evaluation settings for mathematical reasoning
        with Llama-3-8B fine-tuned on Math10K.
    }
    \label{tab:math_hyperparameters}
    \small
    \renewcommand{\arraystretch}{1.08}
    \begin{tabular*}{0.85\linewidth}{
        @{\extracolsep{\fill}}lc@{}
    }
        \toprule
        Hyperparameter & Setting \\
        \midrule

        Optimizer & AdamW \\
        Learning rate (LoRA) & 1e-4 \\
        Normal-gradient learning rate & 0.5 \\
        Weight decay & 0.0 \\
        Warmup ratio & 0.06 \\
        Learning-rate schedule & Linear \\
        Epochs & 1 \\

        \midrule

        LoRA rank & 8 \\
        Target modules & Q, K, V \\
        Training precision & BF16 \\
        \# GPUs per run & 1 \\
        Micro-batch size per GPU & 2 \\
        Gradient accumulation steps & 8 \\
        Effective training batch size & 16 \\
        Maximum sequence length & 256 \\
        \bottomrule
    \end{tabular*}
\end{table}

\paragraph{Experimental Hyperparameters}

The hyperparameters used by GDLoRA for natural language understanding,
mathematical reasoning, commonsense reasoning, and image classification
are provided in Tables~\ref{tab:glue_hyperparameters},
\ref{tab:math_hyperparameters},
\ref{tab:commonsense_hyperparameters}, and
\ref{tab:vit_hyperparameters}, respectively.
Within each comparison, all LoRA-based methods use the same adapter rank
and target modules as GDLoRA. Apart from the main learning rate,
the other training and optimization settings are also held consistent.
For each language model--dataset setting (GLUE, Math10K, and
Commonsense170K), we independently search for the best main-optimizer
learning rate for every method, including FFT, Fira, GDLoRA, and all
LoRA variants, using the same candidate set
$\{1,3,5,7,9,10\}\times10^{-5}$.
This grid starts at $10^{-5}$ with increments of $2\times10^{-5}$,
and additionally includes the upper endpoint $10^{-4}$.
The selected rate is used for full-parameter optimization in FFT and
adapter optimization in LoRA-based methods.
The ``Learning rate (LoRA)'' entries in the language-task tables report
the selected rates for GDLoRA; its normal-gradient learning rate is
reported separately.
Image-classification experiments retain the shared settings in
Table~\ref{tab:vit_hyperparameters}.

For Fira, the SVD-based gradient projection bases are refreshed every
50 optimizer steps, except in the refresh-interval study in
Appendix~\ref{app:projection_refresh}, where the interval is varied explicitly.

All experiments were conducted on a server equipped with
8 NVIDIA A100 GPUs.
The GPU counts reported in the tables refer to individual training runs.
Each experiment was repeated with three different random seeds,
and we report the mean and standard deviation across the three runs.
\begin{table}[t]
    \centering
    \caption{
        Training and evaluation settings for commonsense reasoning
        with Gemma-7B fine-tuned on Commonsense170K.
    }
    \label{tab:commonsense_hyperparameters}
    \small
    \renewcommand{\arraystretch}{1.08}
    \begin{tabular*}{0.85\linewidth}{
        @{\extracolsep{\fill}}lc@{}
    }
        \toprule
        Hyperparameter & Setting \\
        \midrule

        Optimizer & AdamW \\
        Learning rate (LoRA) & 3e-5 \\
        Normal-gradient learning rate & 0.5 \\
        Weight decay & 0.0 \\
        Warmup ratio & 0.06 \\
        Learning-rate schedule & Linear \\
        Epochs & 1 \\

        \midrule

        LoRA rank & 8 \\
        Target modules & Q, K, V \\
        Training precision & BF16 \\
        \# GPUs per run & 1 \\
        Micro-batch size per GPU & 2 \\
        Gradient accumulation steps & 8 \\
        Effective training batch size & 16 \\
        Maximum sequence length & 256 \\

        \bottomrule
    \end{tabular*}
\end{table}

\begin{table*}[t]
    \centering
    \caption{
        Hyperparameters used for GDLoRA in the image classification
        experiments with ViT-Base and ViT-Large.
        The normal-gradient learning rate controls the direct
        updates to the base weights.
    }
    \label{tab:vit_hyperparameters}
    \small
    \setlength{\tabcolsep}{4pt}
    \renewcommand{\arraystretch}{1.15}
    \resizebox{1.0\linewidth}{!}{%
    \begin{tabular}{clcccccccc}
        \toprule
        Model & Hyperparameter
        & \shortstack{Oxford\\Pets}
        & \shortstack{Stanford\\Cars}
        & CIFAR-10 & DTD & EuroSAT
        & FGVC & RESISC45 & CIFAR-100 \\
        \midrule

        \multicolumn{2}{l}{Optimizer}
        & \multicolumn{8}{c}{AdamW} \\
        \multicolumn{2}{l}{Weight decay}
        & \multicolumn{8}{c}{0.01} \\
        \multicolumn{2}{l}{Learning-rate schedule}
        & \multicolumn{8}{c}{Linear} \\
        \multicolumn{2}{l}{Epochs}
        & \multicolumn{8}{c}{20} \\

        \midrule

        \multirow{6}{*}{\rotatebox[origin=c]{90}{ViT-Base}}
        & \# GPUs
        & \multicolumn{8}{c}{1} \\

        & Learning rate (head)
        & 5e-3 & 5e-2 & 5e-2 & 5e-2
        & 5e-2 & 5e-2 & 5e-2 & 1e-2 \\

        & Learning rate (LoRA)
        & 5e-3 & 1e-2 & 1e-2 & 1e-2
        & 1e-2 & 1e-2 & 1e-2 & 1e-2 \\

        & Normal-gradient learning rate
        & \multicolumn{8}{c}{1e-3} \\

        & LoRA rank
        & \multicolumn{8}{c}{8} \\

        & Batch size per GPU
        & \multicolumn{8}{c}{128} \\

        \midrule

        \multirow{6}{*}{\rotatebox[origin=c]{90}{ViT-Large}}
        & \# GPUs
        & \multicolumn{8}{c}{1} \\

        & Learning rate (head)
        & 2e-2 & 1e-2 & 5e-3 & 1e-2
        & 2e-2 & 1e-2 & 1e-2 & 5e-3 \\

        & Learning rate (LoRA)
        & 1e-2 & 1e-2 & 1e-2 & 1e-2
        & 1e-2 & 1e-2 & 1e-2 & 1e-2 \\

        & Normal-gradient learning rate
        & \multicolumn{8}{c}{5e-2} \\

        & LoRA rank
        & \multicolumn{8}{c}{8} \\

        & Batch size per GPU
        & \multicolumn{8}{c}{32} \\

        \bottomrule
    \end{tabular}%
    }
\end{table*}

\section{Magnitude Control and Local Descent of the Normal Update}
\label{app:normal_update_stability}

We analyze one normal-branch update with the adapters and all other
weights fixed. Let $N>0$ count the input positions,
and write the recovered gradient as
\begin{equation}
H=\nabla_{W_{\mathrm{base}}}\mathcal{L}
=\delta^{T}X=\sum_{i=1}^{N}h_i,
\qquad h_i=\delta_i x_i^{T},
\end{equation}
where $x_i$ and $\delta_i$ are column vectors for position $i$.
At the current parameters, define the fixed orthogonal projector
$P_{\perp}(Z)=(I-UU^{T})Z(I-VV^{T})$ and set
$G=H/N$, $G_{\perp}=P_{\perp}(G)$, and
$\Delta W=-\eta_nG_{\perp}$ for $\eta_n>0$.
This analysis, like Eq.~\eqref{eq:gdlora_normal_descent}, assumes
one common positive divisor, an exact orthogonal projector, and an
update applied without rounding. The fixed-count accumulation convention
is specified in Section~\ref{app:normal_accumulation}.

\begin{proposition}[Magnitude control and local descent]
\label{prop:normal_update_stability}
The normal update satisfies
\begin{equation}
\|\Delta W\|_F
\leq \frac{\eta_n}{N}\sum_{i=1}^{N}\|h_i\|_F.
\label{eq:normal_magnitude_bound}
\end{equation}
In particular, if $\|h_i\|_F\leq C$ for all positions, then
$\|\Delta W\|_F\leq\eta_n C$.
If $\mathcal{L}$ has a $\beta$-Lipschitz gradient with respect to
$W_{\mathrm{base}}$ on a neighborhood containing the update segment,
where $\beta>0$, then
\begin{equation}
\mathcal{L}(W_{\mathrm{base}}-\eta_nG_{\perp})
\leq \mathcal{L}(W_{\mathrm{base}})
-\eta_n\left(N-\frac{\beta\eta_n}{2}\right)\|G_{\perp}\|_F^2.
\label{eq:normal_smooth_descent}
\end{equation}
Consequently, for $G_{\perp}\neq0$ and $0<\eta_n<2N/\beta$,
the normal step strictly decreases the current loss.
\end{proposition}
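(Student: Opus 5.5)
The proof has two independent parts: the magnitude bound, and the smoothness-based descent inequality.

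\textbf{Magnitude bound.} Write $\Delta W=-\eta_n P_\perp(G)=-(\eta_n/N)P_\perp(H)$. The map $P_\perp(Z)=(I-UU^T)Z(I-VV^T)$ is an orthogonal projector in the Frobenius inner product; this is established in Appendix~\ref{app:normal_projection}. Hence it is nonexpansive, and $\|P_\perp(H)\|_F\le\|H\|_F$. Expanding $H=\sum_i h_i$ and applying the triangle inequality gives
\[
\|\Delta W\|_F\le \frac{\eta_n}{N}\sum_{i=1}^N\|h_i\|_F .
\]
Under the uniform bound $\|h_i\|_F\le C$, the right-hand side is at most $\eta_n C$. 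It is worth noting that $\|h_i\|_F=\|\delta_i\|_2\|x_i\|_2$, although the argument does not need this.

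\textbf{Descent inequality.} Here $\mathcal L$ is the unnormalized loss, whose gradient in $W_{\mathrm{base}}$ is $H=NG$; the adapters and all other weights are held fixed. I would apply the standard descent lemma for a $\beta$-Lipschitz gradient along the segment $W_{\mathrm{base}}+t\Delta W$, $t\in[0,1]$, which the hypothesis places inside the neighborhood. This yields
\[
\mathcal L(W_{\mathrm{base}}+\Delta W)\le\mathcal L(W_{\mathrm{base}})+\langle H,\Delta W\rangle_F+\tfrac{\beta}{2}\|\Delta W\|_F^2 .
\]
The descent lemma itself follows from the integral form $\mathcal L(W+\Delta W)-\mathcal L(W)=\int_0^1\langle\nabla\mathcal L(W+t\Delta W),\Delta W\rangle\,dt$ together with Cauchy--Schwarz.

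Next, evaluate the linear term. Since $P_\perp$ is self-adjoint and idempotent,
\[
\langle H,G_\perp\rangle_F=N\langle G,P_\perp(G)\rangle_F=N\|G_\perp\|_F^2 .
\]
Equivalently, $G_\parallel$ is orthogonal to $G_\perp$, which is the same computation as Eq.~\eqref{eq:gdlora_normal_descent}. Therefore $\langle H,\Delta W\rangle_F=-\eta_n N\|G_\perp\|_F^2$. The quadratic term is $\tfrac{\beta}{2}\eta_n^2\|G_\perp\|_F^2$. Combining the two gives exactly
\[
\mathcal L(W_{\mathrm{base}})-\eta_n\Bigl(N-\tfrac{\beta\eta_n}{2}\Bigr)\|G_\perp\|_F^2 .
\]

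\textbf{Strict decrease.} If $G_\perp\neq0$ and $0<\eta_n<2N/\beta$, the coefficient $\eta_n(N-\beta\eta_n/2)$ is strictly positive, so the loss strictly decreases.

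The only delicate point is bookkeeping rather than an obstacle. The Lipschitz constant refers to $\mathcal L$ itself, not to $\overline{\mathcal L}$, and this is what makes the factor $N$, rather than $1$, appear. Getting this convention right is what I would double-check, because it is the source of the step-size bound $2N/\beta$.
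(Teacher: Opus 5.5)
Your proposal is correct and follows essentially the same argument as the paper. The magnitude bound comes from nonexpansiveness of the orthogonal projector $P_{\perp}$ plus the triangle inequality over $H=\sum_i h_i$. The descent claim comes from the smoothness inequality applied to $\Delta W=-\eta_nG_{\perp}$, together with $\langle H,G_{\perp}\rangle_F=N\|G_{\perp}\|_F^2$ from self-adjointness and idempotence of $P_{\perp}$. Your added details are accurate refinements that the paper leaves implicit: the integral-form derivation of the descent lemma, and the observation that $\beta$ refers to the unnormalized $\mathcal{L}$, which is what produces the factor $N$.
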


\begin{proof}
An orthogonal projector is nonexpansive in the Frobenius norm.
The triangle inequality therefore gives
\begin{equation}
\|\Delta W\|_F
=\frac{\eta_n}{N}\|P_{\perp}(H)\|_F
\leq\frac{\eta_n}{N}\|H\|_F
\leq\frac{\eta_n}{N}\sum_{i=1}^{N}\|h_i\|_F.
\end{equation}
The uniform bound follows immediately. Self-adjointness and idempotence
of $P_{\perp}$ also imply
\begin{equation}
\langle H,G_{\perp}\rangle_F
=N\langle G,P_{\perp}(G)\rangle_F
=N\|G_{\perp}\|_F^2.
\end{equation}
Applying the smoothness inequality to $\Delta W=-\eta_nG_{\perp}$ yields
\begin{align}
\mathcal{L}(W_{\mathrm{base}}+\Delta W)
&\leq\mathcal{L}(W_{\mathrm{base}})
+\langle H,\Delta W\rangle_F
+\frac{\beta}{2}\|\Delta W\|_F^2\nonumber\\
&=\mathcal{L}(W_{\mathrm{base}})
-\eta_n\left(N-\frac{\beta\eta_n}{2}\right)\|G_{\perp}\|_F^2.
\end{align}
This proves the descent claim.
\end{proof}

With a uniform per-position bound $C$, the corresponding unnormalized
update has bound $\eta_n NC$. Dividing by $N$ removes this explicit
linear dependence from the magnitude bound, without requiring independent
token contributions. This common positive scaling preserves the normal direction
and its orthogonality to the current adapter tangent space.

These results support magnitude control of the normal-gradient branch.
Any loss reduction is already included in $\delta$; division by $N$ is an
additional step-size scaling, equivalent to using $\eta_n/N$ for the
unnormalized normal gradient when $N$ is fixed.
This equivalence does not establish invariance to sequence length.
The descent result concerns a single normal step with fixed adapters and
does not guarantee descent of the joint AdamW and base-weight update or
convergence of the full training procedure.

\subsection{Microbatch accumulation with a fixed input count}
\label{app:normal_accumulation}

In the reported experiments, the normalization count is held fixed
across microbatches within each configuration. Let an optimizer step
contain $M$ microbatches, each with input activations
$X_j\in\mathbb{R}^{N_\mu\times d_i}$, where $N_\mu=bL$ is determined
by the fixed micro-batch size $b$ and input sequence dimension $L$.
Let $\delta_j$ denote the backward signal,
including the loss reduction and microbatch weighting applied by the
trainer, but with any AMP loss scaling removed. Writing
$H_j=\delta_j^TX_j$ and $H=\sum_jH_j$, the normal branch uses
\begin{equation}
\widetilde{G}_t
=\sum_{j=1}^{M}\frac{H_j}{N_\mu}
=\frac{H}{N_\mu},
\qquad
G_{\perp,t}=\frac{1}{N_\mu}P_{\perp,t}(H).
\label{eq:normal_microbatch_accumulation}
\end{equation}
Thus the additional normalization applies one common positive scale
to the accumulated gradient, preserving its direction and the relative
weighting of the microbatches. If $\ell_j$ denotes the trainer-weighted
loss contribution, then $H=\nabla\mathcal{L}_{\mathrm{acc}}$ for
$\mathcal{L}_{\mathrm{acc}}=\sum_j\ell_j$; both branches therefore use
derivatives of the same accumulated objective under this convention.
The normal-gradient controller performs no further division by $M$;
the trainer's accumulation weighting is already contained in $\delta_j$.

The effective learning rate for the accumulated unnormalized normal
gradient is $\eta_n/N_\mu$. For a different fixed input count $N_\mu'$,
the same ideal update at fixed $H$ and projection bases is obtained by
setting $\eta_n'=\eta_nN_\mu'/N_\mu$.
This specifies the scaling convention when changing the microbatch
shape; the reported GDLoRA comparisons retain their stated fixed
microbatch and accumulation configurations.
$N_\mu$ counts rows of the collated input tensor, not unmasked target
tokens. With padding to the same fixed tensor shape, differences in
unpadded sequence lengths therefore do not change this divisor.

All microbatches use the same pre-update parameters. At the optimizer
boundary, we construct $U_t,V_t$ from $A_t,B_t$ and project the accumulated
gradient before the adapter optimizer step. We cache this residual,
update the adapters with AdamW, and then apply the cached normal update
to the base weights. If AMP skips the optimizer step, the base update is
also skipped; the accumulation and residual buffers are then cleared.
For both reasoning settings in Tables~\ref{tab:math_hyperparameters}
and~\ref{tab:commonsense_hyperparameters}, each run uses one GPU,
micro-batch size 2, and 8 accumulation steps, yielding an effective
batch size of 16. The maximum sequence length is 256, and the fixed
count $N_\mu$ is determined by the input tensor shape used in each run.

\subsection{Numerical bases and zero-rank factors}
\label{app:numerical_bases}

For each factor matrix $F\in\{B,A^T\}$, the implementation uses the
absolute tolerance $\epsilon_{\mathrm{orth}}=10^{-6}$.
It removes columns whose Euclidean norms do not exceed this tolerance.
If no columns remain, it returns an empty basis with shape
$\operatorname{rows}(F)\times0$. Otherwise, it computes an unpivoted
reduced QR factorization of the retained columns and keeps only those
columns of $Q$ whose corresponding diagonal entries of $R$ satisfy
$|R_{ii}|>\epsilon_{\mathrm{orth}}$.

An empty basis has a zero associated projector, and its projection
subtraction is skipped. In particular, $B_0=0$ gives an empty $U_0$ and
\begin{equation}
G_{\perp,0}=\widetilde{G}_0(I-V_0V_0^T).
\end{equation}
If both bases are empty, $G_{\perp,0}=\widetilde{G}_0$.
Thus, arbitrary columns returned by QR on an all-zero matrix are not
used as a nonzero projection basis.

This thresholded QR procedure explicitly handles zero initialization,
but it is not a general rank-revealing factorization: for nonzero
rank-deficient factors, unpivoted QR with diagonal thresholding need
not recover the complete column space. The exact tangent-space
statements assume bases of the true column spaces; the numerical
implementation projects using the retained bases described above.

\end{document}